\theoremstyle{plain}
\newtheorem{thm}{Theorem}
\newtheorem{defn}{Definition}
\newtheorem*{defn*}{Definition}
\newtheorem{exmp}{Example} 
\newtheorem{lemma}{Lemma}
\newcommand{\vect}[1]{#1}
\newcommandx{\subv}[2]{\ensuremath{{#1}_{#2}}}
\newcommandx{\supv}[2]{\ensuremath{{#1}^{(#2)}} }
\newcommandx{\subsupv}[3]{\ensuremath{{#1}^{(#2)}_{#3}} }
\newcommandx{\sumv}[2]{\ensuremath{ \sum_{#1}^{#2} } }
\newcommandx{\anglev}[2]{\ensuremath{ \left\langle #1, #2 \right\rangle } }
\newcommandx{\norm}[1]{\ensuremath{ \left\| #1 \right\|_{2}^{2} } }
\newcommandx{\dist}[2]{\ensuremath{ d^{max}(#1,#2) } }
\newcommandx{\grad}[1]{\ensuremath{ \nabla_{#1} } }
\times \vect{y}$ with a model $f_{(\vect{w}, \vect{h})} : \vect{\mathcal{X}} \rightarrow \vect{\mathcal{Y}}$ being specified using weights $\vect{w}$ and hyperparameters $\vect{h}$. Given compact sets $\vect{\mathcal{W}}$ over $\vect{w}$ and $\vect{\mathcal{H}}$ over $\vect{h},$ the goal is to learn the weights by searching over the hypothesis space $\vect{f} = \{ f_{(\vect{w}, \vect{h})} , \forall \vect{h} \in \vect{\mathcal{H}}, \vect{w} \in \vect{\mathcal{W}} \}$ through a loss function $\ell_{w, h}(\vect{t}).$ 
\noindent \textbf{Effective Model Capacity:} We will assume that $\ell_{w}(\vect{t})$ is continuous and twice differentiable over the support $\vect{\mathcal{X}} \times \vect{\mathcal{Y}}$ or $\vect{\mathcal{X}}$, and the compact set $\vect{\mathcal{W}}$. Under these assumptions, let $\ell_{min} = \mathcal{O}_{\mathcal{W}}(\vect{T}) =  min_{\vect{w} \in   \vect{\mathcal{W}}} \quad {E}_{ \vect{t} \in \vect{T} } [\ell_{w}( \vect{t})]$ be the optimization procedure with $\vect{T}$ being a dataset of samples $\vect{t}$ with $\vect{T} \subset \mathcal{B}(\vect{\mathcal{T}})$. Then, given the best hyperparameter/architecture configurations,  the optimization procedure $\mathcal{O}_{\mathcal{W}}$ seeks to find the weights $\vect{w}^* \in \vect{\mathcal{W}}$ that minimizes the loss over a dataset. Given this setting, we define the effective model capacity~(the upper/lower bounds derived in the appendix) as the smallest achievable loss value using $\mathcal{O}_{\mathcal{W}}$ that remains unchanged even when additional data or training is used.
\in \mathcal{B}(\vect{\mathcal{T}})$ with an optimization procedure $\mathcal{O}_{\vect{\mathcal{W}}}(\vect{T}),$ the EMC of the model $f$ is given as
\quad \big[ \mathcal{O}_{\vect{\mathcal{W}}}(\vect{T}) \big] =\underset{\vect{T} \in \mathcal{B}(\vect{\mathcal{T}})  }{min}  \quad \big[ \underset{ \vect{w} \in \vect{\mathcal{W}}}{ min } \quad \underset{ \vect{t} \in \vect{T} }{E} [\ell_{w}( \vect{t})] \big]
\subv{\ell}{\subv{\vect{w}}{k}}(t), t \in \subv{\vect{\mathcal{T}}}{k}$, the model at $k$ is denoted by $f_{\subv{\vect{w}}{k}}$, the goal of CL is to maintain memory of all observed tasks, then, the CL forgetting cost for the interval $\mathbf{k} = [1,k]$ is given as
\textstyle  \underset{ \subv{\vect{w}}{k} \in \subv{\vect{\mathcal{W}}}{k} }{ min } J_{\subv{\vect{w}}{k}}(\subv{ \mathbf{T}}{k}) =\underset{ \subv{\vect{w}}{k} \in \subv{\vect{\mathcal{W}}}{k} }{ min } \sum_{i =1}^{k} \gamma_i \left[ \underset{ \vect{t} \in \vect{T}(i)  }{E} \left[  \ell_{\subv{\vect{w}}{k}}( \vect{t}) \right] \right] \; , \; \; \forall \vect{T}(i) \in \subv{\mathbf{T}}{k},
\in \vect{\mathcal{H}}$, the complete CL problem is
\noindent \textbf{CL Effective Model Capacity and Balance Point:}
      \subsupv{\epsilon}{*}{k} =\sum_{i =k}^{K} \text{FEMC}(i) = \sum_{i =k}^{K} \max_{\mathbf{i}}\; \epsilon_{\mathbf{i}}\end{align*}
\noindent \textbf{Lemma 1.}
      \subsupv{\epsilon}{*}{k+1} - \subsupv{\epsilon}{*}{k} 
\in \subv{\vect{\mathcal{T}}}{i}, i= k,k+1, \cdots K \}$, we can write the following system of recursive equations
      \subsupv{\epsilon}{*}{k} &=  
\subsupv{\epsilon}{*}{k+1} \label{eq:lemma1_recursive} \\
      \subsupv{\epsilon}{*}{k+1} - \subsupv{\epsilon}{*}{k} &= min_{\mathbf{k}} \{ \; -\epsilon_{\mathbf{k}} \}
\in \mathbf{k}\} \label{eq:lemma1_JF} \\
\in \mathbf{k}\} \label{eq:lemma1_diff}
\subsupv{\epsilon}{*}{k+1} - \subsupv{\epsilon}{*}{k} 
\in \mathbf{k}\} \\
\noindent \textbf{Theorem 1.}
\subsupv{\epsilon}{*}{k} -\subsupv{\epsilon}{*}{k+1}
\geq  \underset{k \in \mathbf{k}}{max} \; \{ \underset{\mathbf{T}_{i}}{min} \;  \{ \| \partial_{w_k}  J_{w^*_k}( \subv{ \mathbf{T}}{i}) \|  \|d w^*_k \| \nonumber \\
\subsupv{\epsilon}{*}{k+1} -\subsupv{\epsilon}{*}{k}=      
\cdots \vect{T}(i)\}.$ We further observe that if the optimal cost is differentiated with respect to  $T(k)$ only the $k^{th}$ term in the inner sum will remain. We can then bound it through the following inequalities.
\subsupv{\epsilon}{*}{k+1} -\subsupv{\epsilon}{*}{k}
        \leq  \underset{k \in \mathbf{k}}{min} \; \{ \underset{\mathbf{T}_{i}}{max} \;  \{ \| \partial_{w_k}  J_{w^*_k}( \subv{ \mathbf{T}}{i}) \|  \|d w^*_k \| +\sum_{T(k) \in \mathbf{T}_i}  \sum_{i=k}^{K} \|  \partial_{T(k)} E_{t \in T(i)} \ell_{w^*_i}(t) \| \|d T(k)\| \} \},
\subsupv{\epsilon}{*}{k} -\subsupv{\epsilon}{*}{k+1}
        \geq  \underset{k \in \mathbf{k}}{max} \; \{ \underset{\mathbf{T}_{i}}{min} \;  \{ \| \partial_{w_k}  J_{w^*_k}( \subv{ \mathbf{T}}{i}) \|  \|d w^*_k \| +\sum_{T(k) \in \mathbf{T}_i}  \sum_{i=k}^{K} \|  \partial_{T(k)} E_{t \in T(i)} \ell_{w^*_i}(t) \| \|d T(k)\| \} \},
\noindent \textbf{Theorem 2.}
\in \mathbb{N}$ and $I$, the number of weight updates required to obtain the optimal value. Assume that $\| \partial_{w_k}  J_{w^*_k}( \subv{ \mathbf{T}}{i}) \| \geq \Phi_{w}$, $\|  \partial_{T(k)} E_{t \in T(i)} \ell_{w^*_i}(t) \|\geq \Phi_{T}$, and let the smallest value of $min_{T(k)} \|d T(k)\| \geq \Phi_{dT}.$ Let $L, \mathcal{R}$ be the Lipschitz constants for the cost function and the regularization function respectively with $\alpha_{\text{MIN}}$ being the smallest learning rate. Then, $\sum_{k}^{K} d\subsupv{\epsilon}{*}{k}$ diverges as a function of $K$, and $I$ with and without the regularization factor.
\in \mathbb{N}$ and let the weights at any task $k$  be updated for a total of $I$ steps. Assume $T(k)$ is provided through a series of batches  such that $T(k) = \{ \subsupv{\vect{t}}{i}{k}, i =1, \cdots,  I\}$ with $\subsupv{\vect{t}}{i}{k}$ be a tensor corresponding to batch of data at the $i^{th}$ step for the $k^{th}$ task, sampled uniformly from the underlying support. For the $i^{th}$ update step of the $k^{th}$ task, let the forgetting cost be denoted by $J_{\subv{\vect{w}}{k}}(\subv{ \mathbf{T}}{k})$, gradient be denoted by $ \subsupv{\vect{g}}{i}{k}$, and learning rate by $\subsupv{\alpha}{i}{k}$. Then, 
\subsupv{\alpha}{i}{k} \subsupv{\vect{g}}{i}{k} 
\subsupv{\alpha}{i}{k} \subsupv{\vect{g}}{i}{k} 
\subsupv{\alpha}{i}{k} \subsupv{\vect{g}}{i}{k} 
\subsupv{\alpha}{0}{k} \subsupv{\vect{g}}{0}{k}  \\
\subsupv{\alpha}{1}{k} \subsupv{\vect{g}}{1}{k} \\
\subsupv{\alpha}{I-1}{k} \subsupv{\vect{g}}{I-1}{k}
\subsupv{\alpha}{i}{k} \subsupv{\vect{g}}{i}{k} 
            \subsupv{\epsilon}{*}{k} - \subsupv{\epsilon}{*}{k+1} 
\geq \underset{k \in \mathbf{k}}{max} \; \{ \underset{\mathbf{T}_{i}}{min} \;  \{ \| \partial_{w_k}  J_{w^*_k}( \subv{ \mathbf{T}}{i}) \|  \|d w^*_k \| +\sum_{T(k) \in \mathbf{T}_i}  \sum_{i=k}^{K} \|  \partial_{T(k)} E_{t \in T(i)} \ell_{w^*_i}(t) \| \|d T(k)\| \} \} \nonumber  \\ 
\geq \underset{k \in \mathbf{k}}{max} \; \{ \underset{\mathbf{T}_{i}}{min} \;  \{  \Phi_{w}\|d w^*_k \| +\sum_{T(k) \in \mathbf{T}_i}  \sum_{i=k}^{K} \Phi_{T} \|d T(k)\| \} \} \\ 
\geq \underset{k \in \mathbf{k}}{max} \; \{  \Phi_{w}\|d w^*_k \| + \underset{\mathbf{T}_{i}}{min} \; \sum_{T(k) \in \mathbf{T}_i}  \sum_{i=k}^{K} \Phi_{T} \|d T(k)\| \} \} \\ 
\geq \underset{k \in \mathbf{k}}{max} \; \{  \Phi_{w}\|d w^*_k \| + \; \sum_{T(k) \in \mathbf{T}_i}  \sum_{i=k}^{K} \Phi_{T}  \underset{T(k)}{min} \|d T(k)\| \} \} 
           \subsupv{\epsilon}{*}{k} - \subsupv{\epsilon}{*}{k+1} 
\geq \underset{k \in \mathbf{k}}{max} \; \{  \Phi_{w}\|d w^*_k \| + \; \sum_{T(k) \in \mathbf{T}_i}  \sum_{i=k}^{K} \Phi_{T}  \Phi_{dT} \} \\ &\geq \underset{k \in \mathbf{k}}{max} \; \{  \Phi_{w}\|d w^*_k \| \} + \; \underset{k \in \mathbf{k}}{max} \; \{ \sum_{T(k) \in \mathbf{T}_i}  \sum_{i=k}^{K} \Phi_{T}  \Phi_{dT} \} 
\small\subsupv{\epsilon}{*}{k} - \subsupv{\epsilon}{*}{K} 
            \subsupv{\epsilon}{*}{k} - \subsupv{\epsilon}{*}{K} 
    \subsupv{\epsilon}{*}{k} - \subsupv{\epsilon}{*}{K} 
\subsupv{\alpha}{i}{k} \subsupv{\vect{g}}{i}{k}  \geq  -\sumv{i=0}{I-1}  \subsupv{\alpha}{i}{k}(-L) \geq  \sumv{i=0}{I-1}  \alpha_{\text{MIN}} L\geq  I \alpha_{\text{MIN}} L.$ Thus, we obtain
    \subsupv{\epsilon}{*}{k} - \subsupv{\epsilon}{*}{K} 
\subsupv{\epsilon}{*}{k} - \subsupv{\epsilon}{*}{K} $ diverges as a function of $K, k, I, c.$ 
\subsupv{\alpha}{i}{k} \subsupv{\vect{g}}{i}{k}  \geq  -\sumv{i=0}{I-1}  \subsupv{\alpha}{i}{k} -(L+\beta\mathcal{R}) \geq  \sumv{i=0}{I-1}  \alpha_{\text{MIN}} (L+\beta\mathcal{R})\geq  I \alpha_{\text{MIN}} (L+\beta\mathcal{R}),$ where $L, \mathcal{R}$ are the Lipschitz bounds on the gradients and regularizer function respectively and $\beta>0$ is a coefficient. Thus, we obtain
    \subsupv{\epsilon}{*}{k} - \subsupv{\epsilon}{*}{K} 
\beta\mathcal{R}) + k(K-k)^2 \;    \Phi_{T} c 
\noindent \textbf{Theorem 3.}
\subsupv{\epsilon}{*}{k}$ diverges as a function of $K$, and $I$ without any assumptions on the weight updates.
            \subsupv{\epsilon}{*}{k} - \subsupv{\epsilon}{*}{k+1} 
\geq \underset{k \in \mathbf{k}}{max} \; \{ \underset{\mathbf{T}_{i}}{min} \;  \{ \| \partial_{w_k}  J_{w^*_k}( \subv{ \mathbf{T}}{i}) \|  \|d w^*_k \| +\sum_{T(k) \in \mathbf{T}_i}  \sum_{i=k}^{K} \|  \partial_{T(k)} E_{t \in T(i)} \ell_{w^*_i}(t) \| \|d T(k)\| \} \} \nonumber  \\ 
\geq \underset{k \in \mathbf{k}}{max} \; \{ \underset{\mathbf{T}_{i}}{min} \;  \{  \Phi_{w}\|d w^*_k \| +\sum_{T(k) \in \mathbf{T}_i}  \sum_{i=k}^{K} \Phi_{T} \|d T(k)\| \} \} \\ 
\geq \underset{k \in \mathbf{k}}{max} \; \{  \Phi_{w}\|d w^*_k \| + \underset{\mathbf{T}_{i}}{min} \; \sum_{T(k) \in \mathbf{T}_i}  \sum_{i=k}^{K} \Phi_{T} \|d T(k)\| \} \} \\ 
\geq \underset{k \in \mathbf{k}}{max} \; \{  \Phi_{w}\|d w^*_k \| + \; \sum_{T(k) \in \mathbf{T}_i}  \sum_{i=k}^{K} \Phi_{T}  \underset{T(k)}{min} \|d T(k)\| \} \} 
\tiny \subsupv{\epsilon}{*}{k} - \subsupv{\epsilon}{*}{k+1} 
\geq \underset{k \in \mathbf{k}}{max} \; \{  \Phi_{w}\|d w^*_k \| + \; \sum_{T(k) \in \mathbf{T}_i}  \sum_{i=k}^{K} \Phi_{T}  \Phi_{dT} \} \nonumber \\
\geq \underset{k \in \mathbf{k}}{max} \; \{  \Phi_{w}\|d w^*_k \| \}  
            \subsupv{\epsilon}{*}{k} - \subsupv{\epsilon}{*}{K} 
            \subsupv{\epsilon}{*}{k} - \subsupv{\epsilon}{*}{K} 
            \subsupv{\epsilon}{*}{k} - \subsupv{\epsilon}{*}{K} 
\subsupv{\epsilon}{*}{k} - \subsupv{\epsilon}{*}{K} $ diverges as a function of $K.$
    \item \texttt{wiki}: 0.28
    \item \texttt{git}: 0.28
    \item \texttt{arxiv}: 0.16
    \item \texttt{books}: 0.28
    \item \texttt{wiki}: 1.0
    \item \texttt{wiki}: 0.2, \texttt{git}: 0.8
    \item \texttt{wiki}: 0.1, \texttt{git}: 0.1, \texttt{arxiv}: 0.8
    \item \texttt{wiki}: 0.06, \texttt{git}: 0.07, \texttt{arxiv}: 0.07, \texttt{books}: 0.8
\newtheorem*{thm*}{Theorem}
\newtheorem*{lemma*}{Lemma}
\title{On Understanding of the Dynamics of Model Capacity in Continual Learning}
\author{%
  Supriyo Chakraborty \\
  AI Foundations\\
   Capital One \\
  130 5th Avenue, New York  10011 \\
  \texttt{ supriyo.chakraborty@capitalone.com} \\
  \And
    Krishnan Raghavan \\
  Mathematics and Computer Science\\
  Argonne National Laboratory\\
  Lemont, IL-60439 \\
  \texttt{kraghavan@anl.gov} \\
  Equal contribution
}
\begin{document}
\maketitle
\begin{abstract}
    The stability-plasticity dilemma, closely related to a neural network’s (NN) capacity—its ability to represent tasks—is a fundamental challenge in continual learning (CL). Within this context, we introduce \emph{CL’s effective model capacity (CLEMC)} that characterizes the dynamic behavior of the stability-plasticity balance point. We develop a difference equation to model the evolution of the interplay between the NN, task data, and optimization procedure. We then leverage CLEMC to demonstrate that the effective capacity—and, by extension, the stability-plasticity balance point is inherently non-stationary. We show that regardless of the NN architecture or optimization method, a NN's ability to represent new tasks diminishes when incoming task distributions differ from previous ones. We conduct extensive experiments to support our theoretical findings, spanning a range of architectures—from small feedforward network (FNN) and convolutional networks (CNN) to medium-sized graph neural networks (GNN) and transformer-based large language models (LLMs) with millions of parameters. 
\end{abstract}

\section{Introduction}
Humans can easily adapt to multiple tasks. However, when neural networks~(NN) seek to mimic this behavior~\cite{wang2024comprehensive}, they exhibit a phenomenon known as catastrophic forgetting, where the model forgets older tasks while learning new ones~\cite{wang2024comprehensive}. This well recorded issue is seen irrespective of the NN architecture, from simple linear adaptive systems~\cite{lawrence1993parameter} to massive large language models~\cite{ramasesh2021effect, luoInvestigatingForgettingPreTrained2023}. The field of artificial intelligence that studies this phenomenon is known as continual learning~(CL).

In recent years, numerous studies in CL~\cite{dohareLossPlasticityDeep2023, wangImprovingPlasticityOnline2023, kumarMaintainingPlasticityContinual2023, chenStabilityPlasticityDilemmaContinual, raghavan2021formalizing} have shown that the core issue in CL is the trade-off between forgetting prior information~(catastrophic forgetting) and learning new information~(generalization), also known as the stability-plasticity dilemma. 
Independent lines of inquiry have examined the role of the model characteristics, optimization algorithms, and task distributions on this tradeoff. For instance, the crucial role of model overparameterization in achieving optimal performance in the CL paradigm has been studied in ~\cite{mirzadeh2022wide, goldfarb2023analysis, goldfarb2024the}. Similarly, \cite{mirzadeh2020understanding, StepsizeOptimizationContinual, trantungSharpnessGradientAware2023, zhangDensityDistributionbasedLearning2023, zhaoAdaptCLAdaptiveContinual2023} study the role of various optimization procedures (e.g., of step-size, learning rate, experience-replay, regularization), and \cite{kim2023learnability} study the learnability of CL when subsequent task distributions are overlapping. 
While these works study different aspects of the CL problem, such as model and data in \cite{lan2023elephant} or the model and optimization procedure in \cite{kim2023learnability, mirzadeh2020understanding, chenStabilityPlasticityDilemmaContinual}, they do not consider the complex interplay between the model, optimization procedures and tasks.

 % We emphasize that we are not proposing a new method or highlighting performance improvement. 

In this work, we provide holistic insights into this interplay which is naturally characterized by the solution to the CL problem known as the ``stability plasticity balance point''~\cite{kim2023learnability, mirzadeh2020understanding, chenStabilityPlasticityDilemmaContinual}. In developing this general framework for analyzing CL, we make two contributions. First, we extend the definition of capacity from \cite{nakkiran2021deep} to the CL paradigm, describing capacity (Def~\ref{def:CLEMC}) as the consequence of an interplay between network architecture, hyperparameters, and weights measured through the forgetting cost.  Second, with a dynamic programming-driven formulation of CL~(refer to Fig.~\ref{fig:cap} in Appendix B for a visualization of our recursive formulation), we elucidate the connection between capacity and the balance point~(Lemma~\ref{lem:Der_capacity}). We show that the networks' ability to represent tasks is directly related to capacity through the balance point, that is, higher the capacity (measured through forgetting cost) lower the representation ability of the network.
Subsequently, we lower bound the capacity~(Theorem~\ref{thm:lower_bound_capacity}) and demonstrate in Theorems~\ref{thm:task_nonstationary_weights}, and \ref{thm:task_nonstationary_tasks} that capacity, and by extension, the balance point, is non-stationary in the CL setting. 
% The key conclusion of this work is:\
% \begin{center}
% "\textit{The CL capacity of a model is a function of the interplay between model, tasks, and the optimization procedure. Moreover, regardless of the type of NN, optimization procedure, or task data, the network eventually becomes unsuitable for representing the tasks if each subsequent task differs from the previous one even by a small constant.}"
% % \end{center}Our main conciThe goal of this work is to elucidate that the dynamical behavior of CL is a function of the network, optimization and the task data and their effect on capacity. 

We validate the veracity of our theoretical claims through four case studies using well-studied datasets corresponding to different problem setups and two major classes of CL methods, \textit{experience replay with and without regularization}. In the first case, we use a synthetically generated sine wave dataset~\cite{javed2019meta} and show that the capacity of a feed-forward NN~(FNN) in a regression problem diverges. Next, we leverage a standard convolutional NN~(CNN) for image classification problem through the Omniglot dataset~\cite{beaulieu2020learning} and show divergence of capacity. We then demonstrate these conclusion with a graph neural network for graph classification problems and finally, develop a detailed study using large language models~(LLM) to demonstrate our results on large models. Our findings confirm that our theoretical results hold even when we scale from a simple FNN to a 134 million parameter LLM. Code for the first three case studies is provided at \url{/r/ContLearn-30E7/}. 

\section{Related Works}
Starting from \cite{french1999catastrophic} in 1999 to \cite{goldfarb2024the} in 2024, numerous works have attempted to model/reduce catastrophic forgetting in neural networks. A simple taxonomy of recent published works reveals four categories: regularization-based~\cite{bonicelliEffectivenessEquivariantRegularization2023, kongOvercomingCatastrophicForgetting2023, nokhwalRTRARapidTraining2023, kumarMaintainingPlasticityContinual2023}, model architecture-based~\cite{auddyEffectOptimizerInitializer2023, danielsEfficientModelAdaptation2023, dejaAdaptAlignContinual2023, fanDynamicSubgraphDistillation2023, hanAdaptiveReorganizationNeural2023, lan2023elephant, yeWassersteinExpansibleVariational2023}, experience replay-based~\cite{chrysakisOnlineContinualLearning2020, harunGRASPRehearsalPolicy2023, kongTrustRegionAdaptiveFrequency2023, miAdversarialRobustMemoryBased2023, zengContinualLearningDirichlet2023} and other optimization approaches for CL efficiency~\cite{StepsizeOptimizationContinual,trantungSharpnessGradientAware2023, zhangDensityDistributionbasedLearning2023, zhaoAdaptCLAdaptiveContinual2023}. This huge body of work is focused on improving empirical performance. On the other hand, empirical attempts to study the characteristics of the CL problem have been made as well~\cite{dohareLossPlasticityDeep2023, harunOvercomingStabilityGap2023, lange2023continual, luoInvestigatingForgettingPreTrained2023}. For instance, \cite{dohareLossPlasticityDeep2023} study the loss of plasticity in CL whereas \cite{harunOvercomingStabilityGap2023, lange2023continual} study a phenomenon known as stability gap frequently observed in CL methods. 

The empirical investigative studies cover a wide range of neural network architectures as well, going from FNN/CNN in \cite{dohareLossPlasticityDeep2023, harunOvercomingStabilityGap2023, lange2023continual} to large language models in \cite{luoInvestigatingForgettingPreTrained2023,ramasesh2021effect}. Despite such a huge body of literature, there have only been a few attempts to study CL from a theoretical standpoint. The key reason behind this is that the NN learning problem in CL domain is rather complex to study requiring stringent assumptions that are scarcely held in practice. This is clearly seen from the few approaches that do theoretically analyze the problem. For instance, works in \cite{goldfarb2024the, goldfarb2023analysis, evron2023continual} study the effect of over parameterization and task similarity on forgetting with a linear model under two tasks. Catastrophic forgetting in the presence of task similarity in analyzed in the NTK regime in~\cite{Doan2020ATA}. On the other hand~\cite{lee2021continual} and \cite{lin2023theory} study the complete CL problem with a linear two layer NN. To the best of our knowledge, the only approach that does not make either a two task assumption or assume linearity of the model is \cite{kim2023learnability} but instead focuses on the class incremental setting. 

To provide a general framework to analyze CL, we take a Lyapunov analysis standpoint, a tool that has been used in the control literature~\cite{cencini2013lyapunov}.  In contrast with the existing literature, we analyze the CL problem through a dynamic programming-driven optimal control point of view following the perspective from~\cite{raghavan2021formalizing}. The only assumptions required are twice differentiability and Lipschitz continuity of the loss function- two very practical assumptions in the NN learning domain and our analysis extends to a series of tasks. In a similar vein to \cite{mirzadeh2022wide} we also perform Taylor series approximation to get this differential equation characterization, however, \emph{our theoretical analysis easily extends from a simple FNN to a llm- a very novel contribution to the CL literature}. To the best of our knowledge there has been no theoretical study, where the analysis considers a dynamical behavior of the CL problem that extends across FNN/CNN/GNN and LLM.

\section{Continual Learning Effective Model Capacity~(CLEMC)}
\label{sec:clemc}
Let $x$ and $y$ be random variables corresponding to input and output probability spaces with support $\vect{\mathcal{X}}$ and $\vect{\mathcal{Y}}$ and $\mathcal{B}(\vect{\mathcal{X}})$ and $\mathcal{B}(\vect{\mathcal{Y}})$ representing the corresponding Borel algebras. Define $\vect{t}$ as a random variable denoting the joint space of $\vect{x} \times \vect{y}$ with a model $f_{(\vect{w}, \vect{h})} : \vect{\mathcal{X}} \rightarrow \vect{\mathcal{Y}}$ being specified using weights $\vect{w}$ and hyperparameters $\vect{h}$. Given compact sets $\vect{\mathcal{W}}$ over $\vect{w}$ and $\vect{\mathcal{H}}$ over $\vect{h},$ the goal is to learn the weights by searching over the hypothesis space $\vect{f} = \{ f_{(\vect{w}, \vect{h})} , \forall \vect{h} \in \vect{\mathcal{H}}, \vect{w} \in \vect{\mathcal{W}} \}$ through a loss function $\ell_{w, h}(\vect{t}).$ 
In this paper, we will assume that the hyperparameter/architecture is fixed and therefore, will drop the notation $h$ and denote loss simply as $\ell_{w}(t)$. Throughout the paper, we will assume $x_k = x(k)$ and use them interchangeably, and define boldface $\mathbf{k} = [1,2,\cdots, k]$. In this context, we characterize the effective model capacity as follows.

\noindent \textbf{Effective Model Capacity:} We assume that $\ell_{w}(\vect{t})$ is continuous and twice differentiable over the support $\vect{\mathcal{X}} \times \vect{\mathcal{Y}}$ or $\vect{\mathcal{X}}$, and a compact set $\vect{\mathcal{W}}$. Under these assumptions, let $\ell_{min} = \mathcal{O}_{\mathcal{W}}(\vect{T}) =  min_{\vect{w} \in   \vect{\mathcal{W}}} \quad {E}_{ \vect{t} \in \vect{T} } [\ell_{w}( \vect{t})]$ be the optimization procedure with $\vect{T}$ being a dataset of samples $\vect{t}$ with $\vect{T} \subset \mathcal{B}(\vect{\mathcal{T}})$. Then, given the best hyperparameter/architecture configurations,  the optimization procedure $\mathcal{O}_{\mathcal{W}}$ seeks to find the weights $\vect{w}^* \in \vect{\mathcal{W}}$ that minimize the loss over a dataset. Given this setting, we define the effective model capacity~(the upper/lower bounds derived in the appendix) as the smallest achievable loss that remains unchanged even when additional data or training is used.
\begin{defn}[Effective Model Capacity~(EMC)]\label{def:EMC}
Given $\vect{\mathcal{W}}$ as the weight space and $\vect{T} \in \mathcal{B}(\vect{\mathcal{T}})$ with an optimization procedure $\mathcal{O}_{\vect{\mathcal{W}}}(\vect{T}),$ the EMC of the model $f$ is given as
%\vspace{-2mm}
\begin{align}\label{eq:EMC}\tag{EMC}
    \epsilon = \underset{\vect{T} \in \mathcal{B}(\vect{\mathcal{T}}) }{min}  \quad \big[ \mathcal{O}_{\vect{\mathcal{W}}}(\vect{T}) \big] =\underset{\vect{T} \in \mathcal{B}(\vect{\mathcal{T}})  }{min}  \quad \big[ \underset{ \vect{w} \in \vect{\mathcal{W}}}{ min } \quad \underset{ \vect{t} \in \vect{T} }{E} [\ell_{w}( \vect{t})] \big]
\end{align}
\end{defn}
%\vspace{-2mm}
Def \ref{eq:EMC} takes an approximation error perspective (as in~\cite{niyogi1996relationship}), however, unlike \cite{niyogi1996relationship}, the minimum achievable loss in \eqref{eq:EMC} depends on the optimization procedure used, model architecture, and the dataset. It is also similar to the capacity definition in \cite{nakkiran2021deep}, with the key distinction being that \cite{nakkiran2021deep} focuses on the number of data points that are properly represented by the model. However, this way of defining capacity is often inadequate because numerical superiority over samples alone (without considering the data distribution characteristics) doesn't ensure model usefulness \cite{fernandez2018smote}. Since, a CL problem requires careful attention to the distribution, we define capacity through the forgetting loss.

% Therefore, the effect of the data distribution~(its variance and other characteristics) becomes rather important. 
%Moreover, within the context of \eqref{eq:EMC}, capacity is the irreducible loss value unchanged by adding more data samples~(the upper/lower bound of capacity is derived in the appendix) or additional training.

\noindent \textbf{Characterizing the CL Balance Point:}
% CL involves learning a series of tasks indexed by $k \in [0, K], K \in \mathbb{N}$, where a task $k$, sampled from a support $\vect{\mathcal{T}}(k)$, is represented by dataset $\vect{T}(k) \in \mathcal{B}(\vect{\mathcal{T}}(k))$. 
% The collection of all tasks until $k$ is then denoted as $\subv{\vect{T}}{k} = \{ \vect{T}(0), \vect{T}(1), \ldots, \vect{T}(k) \}$ with $\subv{\vect{\mathcal{T}}}{k}$ being the cumulative support. Given a feasible weight set $\subv{\vect{\mathcal{W}}}{k}$ and hyperparameter set $\vect{\mathcal{H}}$, the model at $k$ is $f_{(\subv{\vect{w}}{k}, \vect{h})}$ with loss function $\subv{\ell}{k}(\subv{\vect{t}}{k}; \subv{\vect{w}}{k}, \vect{h})$. In particular, with the goal of maintaining memory of all observed tasks, we will denote the CL forgetting cost for the interval $\vect{k} = [0,k]$ as
 CL involves learning a sequence of tasks indexed by $k \in [1, K], K \in \mathbb{N}$, where task $k$ is represented by its dataset $\vect{T}(k)$. The collection of all tasks until $k$ can then be denoted as $\subv{\mathbf{T}}{k} = \{ \vect{T}(1), \vect{T}(2), \ldots, \vect{T}(k) \}$ with $\subv{\vect{\mathcal{T}}}{k}$ being the cumulative support pertaining to all the task data the model has seen until $k$. Given weight set $\subv{\vect{\mathcal{W}}}{k}$, loss function $\subv{\ell}{\subv{\vect{w}}{k}}(t), t \in \subv{\vect{\mathcal{T}}}{k}$, the model at $k$ is denoted by $f_{\subv{\vect{w}}{k}}$. The goal of CL is to maintain memory of all observed tasks, then the forgetting cost for the interval $\mathbf{k} = [1,k]$ is given as
\vspace{-2.5mm}
\begin{align}\label{eq:CL_forget}
\underset{ \subv{\vect{w}}{k} \in \subv{\vect{\mathcal{W}}}{k} }{ min } J_{\subv{\vect{w}}{k}}(\subv{ \mathbf{T}}{k}) =\underset{ \subv{\vect{w}}{k} \in \subv{\vect{\mathcal{W}}}{k} }{ min } \sum_{i =1}^{k} \gamma_i \left[ \underset{ \vect{t} \in \vect{T}(i)  }{E} \left[  \ell_{\subv{\vect{w}}{k}}( \vect{t}) \right] \right] \nonumber \;\;\; \forall \vect{T}(i) \in \subv{\mathbf{T}}{k}
\tag{$J_{F}$}
\end{align}
where, $\gamma$ ensures boundedness of $J_{\subv{\vect{w}}{k}}(\subv{ \mathbf{T}}{k})$ (see \cite{raghavan2021formalizing}, Lemma 1). The growth of forgetting cost over progressively increasing task intervals (as new tasks arrive) is shown in Fig.~\ref{fig:cap} (Appendix B, in green). The forgetting cost formulation in \eqref{eq:CL_forget} is the standard in the CL literature~\cite{mirzadeh2020understanding} but, has two key limitations~\cite{chenStabilityPlasticityDilemmaContinual, goldfarb2023analysis} that we highlight using the following illustrative example.

\begin{exmp}
    Consider three learning tasks with feasible regions $\vect{\mathcal{W}}_1, \vect{\mathcal{W}}_2$, and $\vect{\mathcal{W}}_3$, centered at ideal solutions $\vect{w}_1^*, \vect{w}_2^*$, and $\vect{w}_3^*$. The naive cost setup in \eqref{eq:CL_forget} ignores the following interactions.
    
    \noindent \textbf{Sequential Optimization:}  Solving the first task (attaining $\vect{w}_1^*$) means the second task must start from $\vect{w}_1^*$. Therefore, $\vect{w}_1^*$ and its distance from $\mathcal{W}_1 \cap \mathcal{W}_2$ (the feasible region all solutions that work on both tasks 1 and 2) determines how close we can  get to $\vect{w}_2^*.$  In general, as the optimal solution for tasks $[1, k-1]$ is used as the starting point for task $k$. the feasible region of the previous tasks has an influence on the subsequent task~\cite{evron2023continual}[Theorem 3.1]. 
    
    \noindent \textbf{Influence of future tasks:} If the second task induces a significant deviation from $\vect{w}_1^*$, large forgetting is seen~(see \cite{evron2023continual}, Figure 1). Conversely, if the new task has no influence, there's no generalization.

\end{exmp}

% Consider the following illustrative example of three learning tasks with feasible regions $\vect{\mathcal{W}}_1, \vect{\mathcal{W}}_2$, and $\vect{\mathcal{W}}_3$, centered at ideal solutions $\vect{w}_1^*, \vect{w}_2^*$, and $\vect{w}_3^*$. Solving the first task perfectly (attaining $\vect{w}_1^*$) means the second task must start from $\vect{w}_1^*$. If the second task induces a significant deviation from $\vect{w}_1^*$, large forgetting is seen~(see \cite{evron2023continual}[Figure 1]). Conversely, if the new task has no influence, there's no generalization. The naive cost setup in \eqref{eq:CL_forget} ignores this interplay. Additionally, we must understand the bias introduced by the solution and feasible region of the previous task on the subsequent task~\cite{evron2023continual}[Theorem 3.1]. For example, after learning the first task, the search for the second task's solution starts from $\vect{w}_1^*$ and the distance between $\vect{w}_1^*$ and $\mathcal{W}_1 \cap \mathcal{W}_2$(the feasible region all solutions that work on both task 1 and 2) decides how close we can get to $\vect{w}_2^*$(the ideal solution for task 2). 
 It is clear with this example that each tasks' solution has an influence on the future task and at the same time, future tasks performance dictates how well the the model can do on the present tasks. That is, there is an interplay between future tasks and the present task. Mathematically, a complete CL~\cite{raghavan2021formalizing} characterization must therefore consider both the sequential optimization over tasks as well as how each tasks' solution impacts future tasks. Thus, the complete CL problem is
 %For a fixed $\vect{h} \in \vect{\mathcal{H}}$, the complete CL problem is
\vspace{-2mm}
\begin{align}\label{eq:CL_opt}
\supv{V}{*}(u_k) =  \underset{ u_k}{ min } \quad {\sum}_{i = k}^{K} \left[  J_{\subv{\vect{w}}{i}}( \subv{ \mathbf{T}}{i}  )  \right], \;\; \mbox{where} \;  u_{k} = \{ \subv{\vect{w}}{i}, \nonumber i= k,k+1, \cdots K \}
\tag{CL}
\end{align}
\vspace{2mm}
The optimization problem in \eqref{eq:CL_opt} provides the value function, where previous tasks are perfectly remembered (optimizing the sum of forgetting loss, \eqref{eq:CL_forget}) and future tasks will be perfectly learnt (for task $k$, optimizing also for $[k+1, \ldots, K]$ via successive update of model weights). That is, given a starting weight set $\vect{w}_1^* \in \mathcal{W}_1,$ the solution to the CL problem with $K$ expected tasks is  $\{\vect{w}_1^* \in \mathcal{W}_1, \vect{w}_2^* \in \mathcal{W}_1 \cap \mathcal{W}_2, \vect{w}_3^* \in \mathcal{W}_1 \cap \mathcal{W}_2 \cap \mathcal{W}_3 \cdots \vect{w}_k^* \in \cap_{k=1}^{K}\mathcal{W}_1\}$  and $\supv{V}{*}(\{ \vect{w}_1^*, \vect{w}_2^*, \vect{w}_3^*\})$ is the total cost~(corresponding to the balance point). Naturally, the value of $\ell_{min}$(see Def \ref{def:EMC}) corresponding to each of these $\vect{w}_i^*, i= 1, 2, 3, \cdots, K$ describes how well the model performs at the respective stages of the CL problem and therefore (summation of the losses) quantifies capacity in the CL setting. The value function and its progressive evolution is also illustrated in Fig. \ref{fig:cap} (Appendix B, in black). 
%At each task $k$, the value function considers all the previous tasks (arrows adding forgetting costs from prior intervals) and all future tasks (arrows adding forgetting costs from future intervals). 
We now extend Def \ref{def:EMC} to define effective model capacity for a CL problem.
%~(an illustration of this cost function definition is given in Fig. \ref{fig:cap})

\noindent \textbf{CL Effective Model Capacity and Balance Point:}
For ease of exposition, we begin by stating
\begin{defn}[Forgetting Effective Model Capacity~(FEMC)]\label{def:FEMC}
For task $k \in [1, K]$, dataset $\mathbf{T}_k$, weight set $\mathcal{W}_k$, optimization procedure $\mathcal{O}_{\mathcal{W}_k }( \mathbf{T}_k)$, \ref{eq:EMC} at $k$, $\epsilon_k = min_{\mathbf{T}_{k}, \subv{\vect{w}}{k}} J_{\subv{\vect{w}}{k}}(\mathbf{T}_{k})$, we define FEMC at task $k$ as:
\vspace{-2mm}
\begin{align}\label{eq:femc}\tag{FEMC}
      \text{FEMC}(k) = \max_{\mathbf{k}} \epsilon_{\mathbf{k}} = \max \{ \epsilon_1, \epsilon_2,  \cdots, \epsilon_k \} \end{align}
\end{defn}
%\vspace{-2mm}
$FEMC(k)$ at each $k$ is defined by the highest forgetting loss in the interval $[1,k].$ For example, in a three-task scenario, the FEMC at task $3$, $FEMC(3) = \max \{\epsilon_1, \epsilon_2, \epsilon_3\}$, and is determined by the task the model forgets the most. We now define CL effective model capacity as follows.

\begin{defn}[Effective Model Capacity for CL~(CLEMC)]\label{def:CLEMC}
For a task $k \in [1, K]$, we define CLEMC as the sum of FEMC across all possible tasks as
\vspace{-2mm}
\begin{align}\label{eq:clemc}\tag{CLEMC}
      \subsupv{\epsilon}{*}{k} =\sum_{i =k}^{K} \text{FEMC}(i) = \sum_{i =k}^{K} \max_{\mathbf{i}}\; \epsilon_{\mathbf{i}}\end{align}
\end{defn}
%\vspace{-2mm}
% \begin{defn}[Effective Model Capacity for CL~(CLEMC)]\label{def:CLEMC}
% Fix $k \in [0, K],$ dataset $ \mathbf{T}_k,$ weight space $\mathcal{W}_k$ and define an optimization procedure $\mathcal{O}_{\mathcal{W}_k } ( \mathbf{T}_k).$ Then given $\epsilon_k = min_{\mathbf{T}_{k}, w(k)} J_{w(k)}(  \mathbf{T}_{k})$ is the \ref{eq:EMC} for a task $k$ and $\text{FEMC}(k) =max_{\mathbf{k}} \; \epsilon_{\mathbf{k}}$ where $ \epsilon_{\mathbf{k}} = \{ \epsilon_1, \epsilon_2,  \cdots, \epsilon_k \}$ is the forgetting EMC at any $k.$ Then, CLEMC is given as the sum of Forgetting EMC across all possible tasks as
% \vspace{-2mm}
% \begin{align}\label{eq:clemc}\tag{CLEMC}
%       \subsupv{\epsilon}{*}{k} =\sum_{i =k}^{K} \text{FEMC}(k) = \sum_{i =k}^{K} max_{\mathbf{i}} \; \epsilon_{\mathbf{i}}\end{align}
% \end{defn}
% \vspace{-2mm}
Def~\eqref{def:CLEMC} is closely related to the forgetting loss through FEMC.
%as FEMC at each $k$ is defined by the highest forgetting loss in the interval $[0,k].$ For example, in a three-task scenario, the FEMC at each task $k$ is determined by the task it forgets the most. 
If the model learns multiple tasks, we initially obtain the FEMC corresponding to each task, and then, the  $\subsupv{\epsilon}{*}{k}$ is the sum of individual task FEMC (illustrated in Fig. \ref{fig:cap} (Appendix B, in red)). Since the individual task FEMC is proportional to the loss function, perfect representation of the underlying tasks is implied by $\subsupv{\epsilon}{*}{k}=0$ and representation (and FEMC) gets poorer and poorer as $\subsupv{\epsilon}{*}{k}$ increases. Notably, $\subsupv{\epsilon}{*}{k}$ measures the models' CL performance. 
\emph{In summary, to obtain CLEMC at task $k$, we must consider the effect of previous tasks in $[1,\ldots, k]$ on $k$ and the effect of $k$ on the future tasks  $[k, \ldots, K]$. To capture the former, FEMC is defined as the highest forgetting loss among all previous tasks and the latter is summarized by summing FEMC terms over future.}

{\bf Connecting CL and CLEMC:} Similar to \eqref{eq:clemc}, the measure of models' performance has also been defined proportional to the value of the forgetting loss. For instance, \cite{kim2023learnability}[Def 3.1] defines learnability as the gap between empirical risk and the smallest risk in the hypothesis space, but without the minimization over different data samples. Furthermore, \cite{kim2023open}[Theorem 1] suggests that necessary and sufficient conditions for good CL are proportional to effective learning on prior tasks, defined through the forgetting loss. In contrast with the above, where just loss on the prior tasks is considered, in Def~\eqref{def:CLEMC}, both future tasks and bias due to subsequent solution are also considered. The relationship between \eqref{eq:CL_opt} and \ref{eq:clemc} is thus formalized below.
% \begin{lemma}
%     Consider $k \in [0,K]$ with $\mathbf{k} = [0,1,2,\cdots, k]$ and define\eqref{eq:CL_forget} with Let $u_{k} = \{ \subv{\vect{w}}{i}, i= k,k+1, \cdots K \} $ be the sequence of weights starting from $k$ with  $\mathcal{U}(k) =\{  \subv{\vect{\mathcal{W}}}{i}, i = k, k+1, \cdots \};$ their respective compact sets. Define \eqref{eq:CL_opt} and Definition \eqref{eq:clemc}. Then,
%     \vspace{-2mm}
%     \begin{align}\label{eq:firstDiff}\tag{FD}
%     \small
%       \subsupv{\epsilon}{*}{k+1} - \subsupv{\epsilon}{*}{k} 
%        &= min_{\mathbf{k}} \; \{ \underset{\mathbf{T}_{i}}{max} \;  \{ \langle \partial_{\subv{\vect{w}}{k}} \supv{V}{*}(u_{k}) , d \subv{\vect{w}}{k} \rangle  + \sum_{T \in \mathbf{T}_{k}} \left\langle \partial_{T} \supv{V}{*}(u_{k}) , d T \right\rangle.
%       \end{align}
% \label{lem:Der_capacity}
% \end{lemma}
% \vspace{-4mm}
\begin{lemma}
    For $k \in [1,K]$, let $u_{k} = \{ \subv{\vect{w}}{i}, i= k,k+1, \cdots K \} $ be weight sequences from $k$ with $\mathcal{U}(k) =\{  \subv{\vect{\mathcal{W}}}{i}, i = k, k+1, \cdots \}$ -- the compact sets. Next define  \eqref{eq:CL_forget}, \eqref{eq:CL_opt} and  \eqref{eq:clemc} to write
    \begin{align}\label{eq:firstDiff}
      \subsupv{\epsilon}{*}{k+1} - \subsupv{\epsilon}{*}{k} 
       &= min_{\mathbf{k}} \; \{ \underset{\mathbf{T}_{i}}{max} \;  \{\langle \partial_{\subv{\vect{w}}{k}} \supv{V}{*}(u_{k}) , d \subv{\vect{w}}{k} \rangle  \nonumber
        + \sum_{T \in \mathbf{T}_{k}} \langle \partial_{T} \supv{V}{*}(u_{k}) , d T \rangle \}\}
       \tag{FD}
      \end{align}
\textbf{Please see Appendix B for proof.}
\label{lem:Der_capacity}
\end{lemma}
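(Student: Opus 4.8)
The plan is to exploit the recursive, dynamic-programming structure hidden inside both the CL objective \eqref{eq:CL_opt} and the capacity definition \eqref{eq:clemc}, and to convert the first difference of CLEMC into a single-step perturbation of the optimal value function. First I would establish a Bellman-type decomposition of \eqref{eq:CL_opt}: since $\supv{V}{*}(u_k)$ minimizes the sum of forgetting costs over the horizon $k$ to $K$, peeling off the $k$-th term yields the recursion $\supv{V}{*}(u_k) = \min_{\subv{w}{k}} J_{\subv{w}{k}}(\subv{\mathbf{T}}{k}) + \supv{V}{*}(u_{k+1})$. This identity links two consecutive horizons and isolates the contribution of task $k$, and it follows directly by rewriting the base case in terms of the inductive one.

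Next I would treat $u_{k+1}$ as an infinitesimal perturbation of $u_k$ in both the weight coordinate $\subv{w}{k}$ and the data coordinate $\subv{\mathbf{T}}{k}$, and Taylor-expand $\supv{V}{*}$ to first order around $u_k$. Under the paper's standing assumptions (continuity and twice differentiability of $\ell_w$ over the compact set), this gives $\supv{V}{*}(u_{k+1}) = \supv{V}{*}(u_k) + \langle \partial_{\subv{w}{k}} \supv{V}{*}(u_k), d\subv{w}{k} \rangle + \sum_{T \in \mathbf{T}_k} \langle \partial_T \supv{V}{*}(u_k), dT \rangle + \mathcal{O}(2)$. Substituting this expansion into the Bellman recursion cancels the common $\supv{V}{*}(u_k)$ appearing on both sides and produces the technical identity $-\min_{\subv{w}{k}} J_{\subv{w}{k}}(\subv{\mathbf{T}}{k}) = \langle \partial_{\subv{w}{k}} \supv{V}{*}(u_k), d\subv{w}{k} \rangle + \sum_{T \in \mathbf{T}_k} \langle \partial_T \supv{V}{*}(u_k), dT \rangle + \mathcal{O}(2)$, which expresses the negative per-task forgetting cost purely through gradients of the value function dotted with the weight and data perturbations.

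With this identity in hand, the remaining work is algebraic. From \eqref{eq:clemc} I would peel off the leading FEMC term to obtain the telescoping relation $\subsupv{\epsilon}{*}{k} = \max_{\mathbf{k}} \epsilon_{\mathbf{k}} + \subsupv{\epsilon}{*}{k+1}$, so that the first difference reduces to $\min_{\mathbf{k}}\{-\epsilon_{\mathbf{k}}\}$. Expanding $\epsilon_{\mathbf{k}}$ via its definition as $\min_{\mathbf{T}_i}\min_{w_i} J_{w_i}(\mathbf{T}_i)$ and using the elementary identity $-\min(\cdot) = \max(-(\cdot))$ flips the inner $\min_{\mathbf{T}_i}$ into a $\max_{\mathbf{T}_i}$. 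Finally I would substitute the technical identity for $-\min_{w_i} J$ and discard the $\mathcal{O}(2)$ remainder to arrive exactly at \eqref{eq:firstDiff}.

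I expect the main obstacle to be rigorously justifying the first-order Taylor expansion of $\supv{V}{*}$, which is essentially an envelope-theorem argument, since $\supv{V}{*}$ is itself defined through a minimization over the entire future trajectory $u_{k+1}$. Differentiating through that nested $\min$ requires the minimizer to vary smoothly with the perturbations $d\subv{w}{k}$ and $dT$, and requires controlling the higher-order term $\mathcal{O}(2)$ uniformly over the compact sets so that its omission is legitimate. By contrast, the $\min$/$\max$ bookkeeping over the intervals $\mathbf{k}$ is routine, though care is needed so that the outer $\min_{\mathbf{k}}$ and the index-over-$i$ structure of FEMC are preserved consistently when the negation is pushed inside.
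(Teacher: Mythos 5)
Your proposal follows the paper's own argument essentially step for step: the same Bellman recursion $\supv{V}{*}(u_k) = \min_{\subv{\vect{w}}{k}} J_{\subv{\vect{w}}{k}}(\subv{\mathbf{T}}{k}) + \supv{V}{*}(u_{k+1})$, the same first-order Taylor expansion of $\supv{V}{*}$ yielding the technical identity for $-\min_{\subv{\vect{w}}{k}} J$, and the same telescoping of \eqref{eq:clemc} followed by the $\min$/$\max$ flip and discarding of $\mathcal{O}(2)$. The approach is correct relative to the paper, and your closing caveat about rigorously justifying the envelope-style differentiation of $\supv{V}{*}$ is a concern the paper itself does not resolve either.
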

% \begin{proof}
%     See Appendix B
%     % \input{lem__FD.tex}
% \end{proof}

\textit{If each subsequent task is different than the previous task, the cumulative change in tasks, $d \vect{T}(k),$ is going to lead to deteriorating capacity. In particular, the change in $d \vect{T}(k),$ is going to drive a change in weights, $d \subv{\vect{w}}{k},$ which in turn drives a change in capacity. This interplay accumulates with increasing tasks eventually deteriorating capacity.}
\section{Analysis}
In this section, we perform a two-fold analysis to prove our main idea, ``capacity diverges if tasks change constantly". First, we formally prove this result. Later, we demonstrate experimentally, that the capacity diverges irrespective of the model architecture or the data used. \emph{An experimentally inclined reader can safely skip the theoretical analysis and get the same insights from our empirical observations.} We recommend reading this section to understand why capacity diverges.
\subsection{Theoretical Analysis}
%To study the effect of different quantities, we derive a lower bound on the first difference of $ \subsupv{\epsilon}{*}{k}.$
We begin by deriving a lower bound on the first difference of $ \subsupv{\epsilon}{*}{k}$~(derived in Lemma \ref{lem:Der_capacity}) and then analyze the impact of the independent terms of the bound on the effective capacity.

\begin{thm} The first difference in CLEMC \eqref{eq:firstDiff} is lower bounded as \vspace{-2mm}
\begin{align} \label{eq:capacity_lb}\tag{LB}
&\subsupv{\epsilon}{*}{k} -\subsupv{\epsilon}{*}{k+1} \geq \underset{k \in \mathbf{k}}{max}  \{ \underset{\mathbf{T}_{i}}{min}   \{ \| \partial_{w_k}  J_{w^*_k}( \subv{ \mathbf{T}}{i}) \|  \|d w^*_k \| \nonumber
    +\sum_{T(k) \in \mathbf{T}_i}  \sum_{i=k}^{K} \|\partial_{T(k)} E_{t \in T(i)} \ell_{w^*_i}(t) \| \|d T(k)\| \} \}, \nonumber 
    \end{align}
    \textbf{Please see Appendix C for proof.}
% Proof in Appendix C
\label{thm:lower_bound_capacity}
\end{thm}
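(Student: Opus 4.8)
The plan is to start from the exact first-difference identity \eqref{eq:firstDiff} established in Lemma~\ref{lem:Der_capacity}, convert the inner products into products of norms via Cauchy--Schwarz, bound each resulting gradient-norm factor from above by a task-level quantity, and finally negate the whole inequality to obtain the stated lower bound on $\subsupv{\epsilon}{*}{k} - \subsupv{\epsilon}{*}{k+1}$.

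First I would take the right-hand side of \eqref{eq:firstDiff}, namely $\min_{\mathbf{k}} \max_{\mathbf{T}_i} \{ \langle \partial_{w_k} \supv{V}{*}(u_k), dw_k\rangle + \sum_{T} \langle \partial_T \supv{V}{*}(u_k), dT\rangle\}$, and apply $\langle a,b\rangle \le \|a\|\|b\|$ termwise. Because Cauchy--Schwarz preserves the direction of a $\le$ inequality and the outer operators $\min_{\mathbf{k}}$ and $\max_{\mathbf{T}_i}$ are both monotone, this produces an upper bound on $\subsupv{\epsilon}{*}{k+1} - \subsupv{\epsilon}{*}{k}$ whose summands are $\|\partial_{w_k}\supv{V}{*}(u_k)\|\,\|dw_k\|$ and $\|\partial_T\supv{V}{*}(u_k)\|\,\|dT\|$. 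Note that the inner products here are non-positive (from the technical lemma, they equal $-\min J_{w_k}$), so the positive Cauchy--Schwarz bound is a genuine upper bound.

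The core of the argument is to replace the two value-function gradient norms with the task-level quantities appearing in \eqref{eq:capacity_lb}. For the weight term I would expand $\supv{V}{*}(u_k) = \min_{u_k}\sum_{i=k}^K J_{w_i}(\mathbf{T}_i)$, push the derivative through the sum, and use that (i) the gradient norm along the optimal trajectory is no larger than that along any fixed trajectory, allowing the joint $\min$ over $u_k$ to be replaced by a sum of per-task minima, and (ii) every term with index $i>k$ has no dependence on $w_k$ and hence vanishes, leaving only $\|\partial_{w_k}\min_{w_k}J_{w_k}(\mathbf{T}_i)\|$, which I identify with $\|\partial_{w_k}J_{w^*_k}(\mathbf{T}_i)\|$. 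For the data term I would instead differentiate with respect to a single block $T(k)$; since $\mathbf{T}_i = \{T(1),\ldots,T(i)\}$ and the inner loss is $\sum_{p=1}^i E_{t\in T(p)}\ell_{w_i}(t)$, only the summand $p=k$ survives for each $i\ge k$, giving $\sum_{i=k}^K \|\partial_{T(k)} E_{t\in T(i)}\ell_{w^*_i}(t)\|$, and summing over the blocks $T(k)\in\mathbf{T}_i$ yields the double-sum factor.

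Finally I would substitute these bounds back and multiply through by $-1$, which flips $\le$ to $\ge$, turns $\min_{\mathbf{k}}$ into $\max_{\mathbf{k}}$ and $\max_{\mathbf{T}_i}$ into $\min_{\mathbf{T}_i}$, recovering \eqref{eq:capacity_lb}. The main obstacle I anticipate is justifying step (i) rigorously: the claim that the value-function gradient evaluated along the optimal trajectory is dominated in norm by the sum of per-task minimized gradients is essentially an envelope/Danskin-type statement, and the interchange of $\partial_{w_k}$ with the outer $\min$ over $u_k$ is the delicate point requiring the twice-differentiability and compactness hypotheses. By contrast, the vanishing of future-task terms and the single-block survival in the data derivative are routine consequences of the dependency structure of \eqref{eq:CL_forget}.
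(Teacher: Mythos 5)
Your proposal follows essentially the same route as the paper's Appendix C proof: Cauchy--Schwarz on the inner products from Lemma~\ref{lem:Der_capacity}, bounding $\|\partial_{w_k}V^{(*)}(u_k)\|$ by the per-task minimized gradient with future-task terms vanishing, bounding $\|\partial_{T(k)}V^{(*)}(u_k)\|$ by the surviving $p=k$ summands, and negating to flip $\min/\max$. The one step you flag as delicate --- the envelope-type interchange of $\partial_{w_k}$ with $\min_{u_k}$ --- is exactly the step the paper asserts without further justification in \eqref{eq:th1_ineq11}, so your account matches the paper's argument including its weakest link.
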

% \begin{proof}
%     See Appendix C %\input{LBFD.tex}
% \end{proof}
It is straightforward to see that this lower bound in Theorem~\ref{thm:lower_bound_capacity} is zero, given no change in tasks ($d \vect{T}(k)$) or the weights ($d \subsupv{\vect{w}}{*}{k}$). However, in practice each time a task $k$ is introduced to the CL problem, there is a change in the value function. This change is an accumulation of the impact of the new task $k$, on all the prior tasks that in the interval $[1,k]$  ($\sum_{i = 1}^{k}$ at the outer of the two terms in \eqref{eq:capacity_lb} accumulates this change). For each task $i$ in this sum, \eqref{eq:capacity_lb} is a function of two key terms,  (I)~``the norm of the gradient of the value function with respect to the solution of the CL problem at $i^{th}$ task" and (II)~``the norm of the change in the value function due to change in the data at the $i^{th}$ task."  We now study the effects of each of these terms below.

\noindent \textbf{(I)-Capacity diverges~(deteriorates) for bounded weight updates:} To illustrate the effect of weight update, we assume that experience replay~(ER)-driven CL methods define either (i) a forgetting cost using all the available tasks, and/or (ii) utilize a regularizer on top of the forgetting cost~\cite{cumminsComparativeStudyContinual2023}. We further assume that, at each task $k$ the weights are updated for a total of $I$ steps. Under these assumptions, we show that for both settings (i) and (ii) above, the effective capacity diverges.
%----------------------------------------------------------
\begin{thm} 
\label{thm:task_nonstationary_weights}
 Fix $k \in \mathbb{N}$ and $I$, the number of weight updates required to obtain the optimal value. Assume that $\| \partial_{w_k}  J_{w^*_k}( \subv{ \mathbf{T}}{i}) \| \geq \Phi_{w}$, $\|  \partial_{T(k)} E_{t \in T(i)} \ell_{w^*_i}(t) \|\geq \Phi_{T}$, and let the smallest value of $min_{T(k)} \|d T(k)\| \geq \Phi_{dT}.$ Let $L, \mathcal{R}$ be the Lipschitz constants for the cost function and the regularization function respectively with $\alpha_{\text{MIN}}$ being the smallest learning rate. Then, $\sum_{k}^{K} d\subsupv{\epsilon}{*}{k}$ diverges as a function of $K$, and $I$ with and without the regularization factor.
 
 \textbf{Please see Appendix D for proof.}
\end{thm}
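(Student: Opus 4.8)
The plan is to start from the lower bound \eqref{eq:capacity_lb} of Theorem~\ref{thm:lower_bound_capacity} and turn its right-hand side into an explicit expression that provably grows without bound in both $K$ and $I$. First I would replace the two gradient-norm factors in \eqref{eq:capacity_lb} by their uniform lower bounds $\Phi_w$ and $\Phi_T$, and lower-bound the data-perturbation norm $\|dT(k)\|$ by $\Phi_{dT}$. This last substitution is what lets me discharge the inner $\min_{\mathbf{T}_i}$: once the summand no longer depends on the particular dataset realization, the minimization becomes vacuous. The first difference $\epsilon^{*}_k-\epsilon^{*}_{k+1}$ then collapses to $\max_{k\in\mathbf{k}}\{\Phi_w\,\|dw^*_k\| + \sum_{T(k)\in\mathbf{T}_i}\sum_{i=k}^{K}\Phi_T\Phi_{dT}\}$, i.e. a weight-displacement term plus a purely combinatorial data term, and I can split the $\max$ over the two additive pieces.

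Second, I would carry out the counting. Since $\mathbf{T}_i$ contains $i$ sub-datasets and the index $i$ ranges over $[k,K]$, the double sum contributes a factor proportional to $k(K-k)$, while the outer telescoping sum $\sum_{k}^{K}$ collapses the left side to $\epsilon^{*}_k-\epsilon^{*}_K$ and accumulates the data term into a $k(K-k)^2\,\Phi_T\Phi_{dT}$ contribution. For any fixed positive $\Phi_T,\Phi_{dT}$ this already diverges as $K\to\infty$, independently of any assumption on the optimizer; exhibiting the additional divergence in $I$ is then the job of the weight term.

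Third, and this is the crux, I would lower-bound the cumulative weight displacement $\|dw^*_k\|$ in terms of the number of optimizer steps $I$. I would first state and prove a technical lemma: writing $dw^*_k = w^{(I)}_k - w^{(0)}_k$ for a run of $I$ gradient steps with learning rates $\alpha^k_i$ and gradients $g^k_i$, telescoping the update rule $w^{(i+1)}_k = w^{(i)}_k - \alpha^k_i g^k_i$ gives $dw^*_k = -\sum_{i=0}^{I-1}\alpha^k_i g^k_i$. Using the Lipschitz constant $L$ as the per-step gradient bound and $\alpha_{\text{MIN}}$ as the smallest learning rate, I would argue $\|dw^*_k\|\geq I\,\alpha_{\text{MIN}} L$, so after summing from $k$ to $K$ the weight term becomes $(K-k)\,\Phi_w\, I\,\alpha_{\text{MIN}} L$, which diverges jointly in $K$ and $I$. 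For the regularized case I would simply swap the per-step gradient bound for the combined Lipschitz constant $L+\beta\mathcal{R}$ of cost-plus-regularizer and repeat the estimate to obtain $\|dw^*_k\|\geq I\,\alpha_{\text{MIN}}(L+\beta\mathcal{R})$, yielding the same conclusion.

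The hard part will be this third step: turning the telescoped displacement $-\sum_i\alpha^k_i g^k_i$ into a genuine lower bound $I\,\alpha_{\text{MIN}} L$ on its norm. The subtlety is that consecutive gradient steps can partially cancel, so the norm of a sum of $I$ vectors need not scale linearly in $I$ even when each summand has magnitude on the order of $L$; a fully rigorous argument needs either an alignment/monotonicity assumption on the gradients along the optimization path or a direct lower bound on the norm of the running sum, rather than the naive replacement of each $g^k_i$ by $L$. I would flag this as exactly the point where the Lipschitz and $\alpha_{\text{MIN}}$ hypotheses must do the real work, and would be careful to make the estimate honest there rather than elsewhere in the chain, since the $K$-divergence already follows cleanly from the combinatorial data term.
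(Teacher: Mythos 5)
Your proposal follows essentially the same route as the paper's own proof: the same substitution of $\Phi_w,\Phi_T,\Phi_{dT}$ into the Theorem~\ref{thm:lower_bound_capacity} bound, the same sub-dataset counting yielding the $k(K-k)^2\,\Phi_T\Phi_{dT}$ term, the same telescoping technical lemma giving $d\vect{w}_k^* = -\sum_{i=0}^{I-1}\alpha^{(i)}_k \vect{g}^{(i)}_k$, and the same Lipschitz/$\alpha_{\text{MIN}}$ estimate (with $L$ replaced by $L+\beta\mathcal{R}$ in the regularized case). The one step you rightly flag as delicate --- passing from the norm of the telescoped sum $\bigl\| \sum_{i}\alpha^{(i)}_k \vect{g}^{(i)}_k\bigr\|$ to the linear-in-$I$ lower bound $I\,\alpha_{\text{MIN}}L$, which fails in general because successive gradient steps can cancel --- is performed in the paper exactly by the naive componentwise replacement you warn against, so your proposal is no weaker than the paper's argument and is more candid about where an additional alignment or monotonicity hypothesis on the gradients is actually needed.
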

% \begin{proof}
%     See Appendix D   
% % \input{thm_non_stati_weights.tex}
% \end{proof}
Theorem~\ref{thm:task_nonstationary_weights} demonstrates an important and novel result in the CL literature. In essence, for any CL algorithm in the literature with standard gradient driven optimization regime, capacity will diverge as long as the each subsequent tasks keeps accumulating constant albeit small differences.  Therefore, CL algorithms have the potential to result in a model that does not represent all the tasks reasonably. Moreover, this behavior is uncontrollable as tasks are unknown. 

\noindent \textbf{(II)-Capacity diverges~(deteriorates) when you have a constant change in the tasks:} To demonstrate the effect of tasks on capacity, we state the following theorem
\begin{thm}
  Under the condition of Theorem \ref{thm:task_nonstationary_weights}, let the maximum change in subsequent tasks and weights be given by $ \underset{k \in \mathbf{k}}{max} \; \{ \Phi_{T}  \Phi_{dT} \} = c.$
  Then, the $\sum_{k}^{K} d\subsupv{\epsilon}{*}{k}$ diverges as a function of $K$, and $I$ without any assumptions on the weight updates. 
  
  \textbf{Please see Appendix E for proof.}
\label{thm:task_nonstationary_tasks}
\end{thm}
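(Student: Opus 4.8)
The plan is to reuse, almost verbatim, the chain of inequalities developed for Theorem~\ref{thm:task_nonstationary_weights}, departing from it only at the final step. First I would start from the lower bound \eqref{eq:capacity_lb} of Theorem~\ref{thm:lower_bound_capacity} and substitute the uniform gradient floors $\| \partial_{w_k} J_{w^*_k}(\subv{\mathbf{T}}{i})\| \geq \Phi_w$ and $\| \partial_{T(k)} E_{t \in T(i)} \ell_{w^*_i}(t)\| \geq \Phi_T$, reducing the right-hand side to $\max_{k\in\mathbf{k}}\{\min_{\mathbf{T}_i}\{\Phi_w \|d w^*_k\| + \sum_{T(k)\in\mathbf{T}_i}\sum_{i=k}^K \Phi_T \|dT(k)\|\}\}$. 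Since the weight term is data-independent, I would pull it outside the inner $\min_{\mathbf{T}_i}$, push the remaining minimization through the finite sums onto each $\|dT(k)\|$, and replace it by its floor $\Phi_{dT}$. The outer $\max_{k\in\mathbf{k}}$ then splits additively, and summing the first-difference inequality telescopically from $k$ to $K$ collapses the left-hand side to $\subsupv{\epsilon}{*}{k}-\subsupv{\epsilon}{*}{K}$; on the right, the inner $\sum_{i=k}^K$ contributes a factor $(K-k)$ and each $\mathbf{T}_k$ contributes its $k$ sub-datasets, yielding the coefficient $k(K-k)^2$ on $\Phi_T\Phi_{dT}$, exactly as in Theorem~\ref{thm:task_nonstationary_weights}.

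The decisive and only new step is the last one. Instead of invoking the weight-update Lemma to convert $\|d w^*_k\|$ into the $I\alpha_{\text{MIN}}L$-type expression, I would simply observe that the weight contribution $\sum_k^K \Phi_w\|d w^*_k\|$ is nonnegative and may therefore be discarded from a lower bound without any assumption, leaving $\subsupv{\epsilon}{*}{k}-\subsupv{\epsilon}{*}{K} \geq k(K-k)^2\, c$ after substituting the hypothesis $\max_{k\in\mathbf{k}}\{\Phi_T\Phi_{dT}\}=c$. As $c>0$ is a fixed positive constant and $k(K-k)^2$ grows polynomially in $K$ for fixed $k$, the right-hand side is unbounded as $K\to\infty$, which establishes divergence of $\sum_k^K d\subsupv{\epsilon}{*}{k}$ in $K$ with no condition imposed on the weight increments. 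This is precisely the sense in which Theorem~\ref{thm:task_nonstationary_tasks} strengthens Theorem~\ref{thm:task_nonstationary_weights}: the task-change term alone suffices to force divergence, so the assumptions governing the number and magnitude of gradient-based weight updates become unnecessary.

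The main obstacle is notational rather than analytical: I would need to make the telescoping sum and the counting of sub-datasets precise enough that the coefficient is a genuinely growing polynomial in $K$ rather than an artifact of loose index bookkeeping, and to state explicitly that dropping the weight term is legitimate because each summand $\Phi_w\|d w^*_k\|\geq 0$. Everything preceding that is a direct transcription of the Theorem~\ref{thm:task_nonstationary_weights} argument with its final weight-update substitution simply omitted.
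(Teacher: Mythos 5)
Your proposal follows the paper's own proof essentially verbatim: the same substitution of the gradient floors $\Phi_w,\Phi_T$ into the lower bound of Theorem~\ref{thm:lower_bound_capacity}, the same telescoping from $k$ to $K$, and the same sub-dataset counting that produces the $k(K-k)^2\,\Phi_T c$ term, from which divergence in $K$ follows. The only deviation is that you discard the nonnegative weight term $\sum\Phi_w\|dw^*_k\|$ outright, whereas the paper retains it via $\|dw^*_k\|\geq\Phi_{dw}$; your version is a harmless (arguably cleaner) simplification that better matches the theorem's claim of requiring no assumptions on the weight updates, and does not change the argument.
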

% \begin{proof}
%     See Appendix E %\input{thm_non_stati_tasks.tex}
% \end{proof}

Theorem~\ref{thm:task_nonstationary_tasks} shows that when a constant change is introduced into the tasks even without any assumptions on the weights, the model becomes unsuitable to represent the tasks. The impact of task similarity on CL has also been studied in  \cite{lin2023theory, evron2023continual, kim2023learnability, goldfarb2024the}. In contrast with Theorem~\ref{thm:task_nonstationary_tasks}, \cite{lin2023theory, evron2023continual, goldfarb2024the} study the impact for a linear classifier. In particular, \cite{goldfarb2024the}[Theorem 3] shows a monotonic decrease in forgetting cost as a function of similarity. For a two task case, Theorem~\ref{thm:task_nonstationary_tasks} indicates the same result in \cite{goldfarb2024the}[Theorem 3] as similar tasks will result in no change in capacity. At first, Theorem~\ref{thm:task_nonstationary_tasks} might appear contradictory to \cite{kim2023learnability}[Theorem 3.7], however, our result actually aligns with \cite{kim2023learnability}[Theorem 3.7]. Note that in the case when the overlap between distributions will keep decreasing, the loss function will proportionately increase and the risk gap will diverge. 

% The impact of task similarity on CL has also been studied in  \cite{lin2023theory, evron2023continual, kim2023learnability, goldfarb2024the}. In contrast with Theorem~\ref{thm:task_nonstationary_tasks}, \cite{lin2023theory, evron2023continual, goldfarb2024the} study the impact for a linear classifier. In particular,  \cite{goldfarb2024the}[Theorem 3] shows a monotonic decrease in forgetting cost as a function of similarity. For a two task case, Theorem~\ref{thm:task_nonstationary_tasks} indicates the same result in \cite{goldfarb2024the}[Thm 3] as similar tasks will result in no change in capacity. On first note, Theorem~\ref{thm:task_nonstationary_tasks} might appear contradictory to [Theorem 3.7]\cite{kim2023learnability}, however, our result actually aligns with [Theorem 3.7]\cite{kim2023learnability}. Note that in the case when the overlap between distributions will keep decreasing, the loss function will eventually get larger and larger and by extension, the risk gap will diverge leading to poor representation.

\begin{wrapfigure}[22]{r}{0.45\textwidth}
\includegraphics[width=0.44\textwidth]{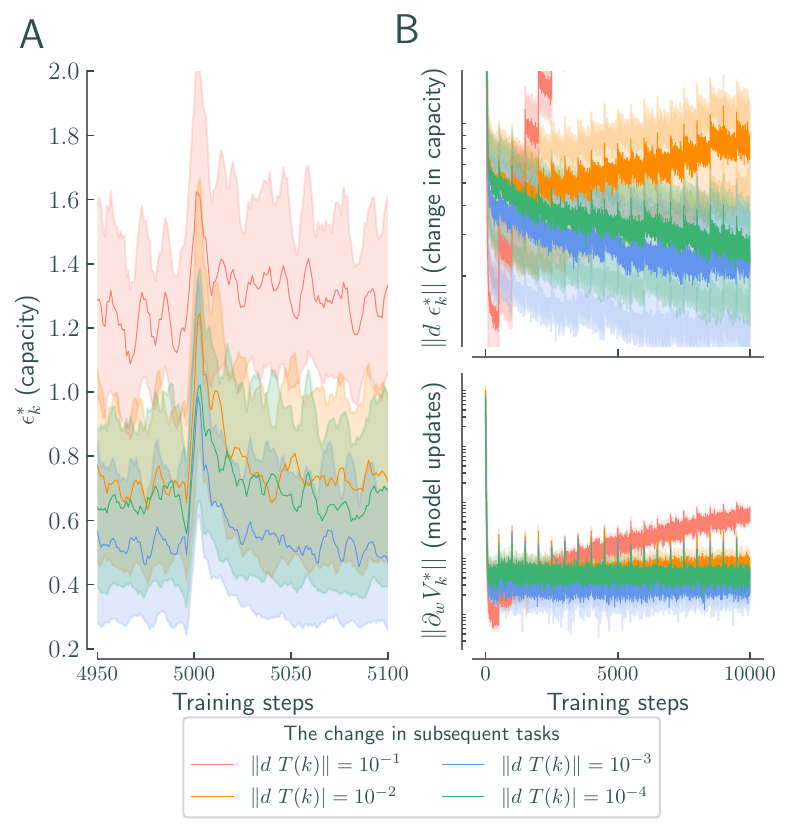}
\caption{A: Forgetting cost with ER; B: (top) capacity; (bottom) the gradient of capacity with respect to weights as a function of training steps with $\|d \epsilon^*_k \| = \|\frac{\partial V^{(*)}}{ \partial \vect{w}}\|+\|\frac{\partial V^{(*)}}{ \partial \vect{x}}\|$.}
\label{fig:sine_data}
\end{wrapfigure}
\subsection{Experimental Analysis}
In this section, we aim to substantiate the theoretical results and to that end, we develop an array of experiments where we show that capacity diverges with respect to change in tasks irrespective of the type and scale of the model. In all these experiements, we measure the capacity $\epsilon_k^*,$ the first difference in capacity
$d~\epsilon_k^*$ and the derivative of the value function with respect to weights, 
$\partial_{w} V_k^{(*)}.$ 

We reiterate that, \emph{this work does not present a new method nor does it pertain to demonstrating a new way of doing CL}, but, the goal is to elucidate how the shift in the data-distribution affects the neural network model in the CL setting. To illuminate on this perspective, we build our experiments on popular neural network architectures, namely: feed forward NN~(FNN), convolutional NN~(CNN), graph NN~(GNN) and a transformer-based model.  We argue that, for any particular model, the phenomenon of deteriorating capacity as observed on one dataset does translate to other datasets as well because the divergence of capacity is the function of how the NN model react to the shift in the data distribution.  Therefore, we choose datasets that are easier to analyze but still relevant in the CL paradigm, both in the supervised and the self-supervised learning regimes. In particular, we utilize a FNN with a synthetic sine wave dataset~\cite{javed2019meta}, a CNN with the Omniglot dataset~\cite{beaulieu2020learning}, a GNN with synthetic graph dataset and a transformer-driven large language model~(LLM) on a trillion (T) tokens dataset provided by RedPajama~\cite{together2023redpajama}. We execute FNN/CNN/GNN experiments using the JAX library and we utilize pytorch for the LLM experiments.

\noindent \textbf{Case Study 1: Feed-forward NNs} 
\emph{Setup:} For this experiment, we generate a total of twenty tasks, where each task is comprised of sine waves, generated by increasing the value of amplitude and frequency by a quantity $\left\| d \vect{T}(k)  \right\| $ to indicate distribution shift. For analysis, we observe the trend of $\subsupv{\epsilon}{*}{k}$~(capacity) for two standard methods in CL: Experience Replay (ER) shown in Fig.~\ref{fig:sine_data} and regularized ER shown in Fig.~\ref{fig:sine_data_2}. We simulate four versions of this twenty task CL problem by choosing different values of distribution shift~$\left\| d\vect{T}(k)  \right\|$, i.e. $\left\| d\vect{T}(k)  \right\|  \in \{10^{-01}, 10^{-02}, 10^{-03}, 10^{-04}\}$ and learn $20$ tasks for $10$ repetitions with mean squared error (MSE) cost and $500$ epochs per task.

\emph{Analysis of CL using ER:} In panel A of Fig. \ref{fig:sine_data}, we plot the mean of capacity,  evaluated using its upper bound through the forgetting cost evaluated using the MSE averaged across $10$ repetitions. The standard deviation is represented using a shaded region. 

We first note that, for any new task (we choose a random task at the middle of the learning process to illustrate this), there is an instantaneous increase in the capacity~(upper bounded by the forgetting cost). This increase is then minimized by the optimizer, a phenomenon known as stability gap~(\cite{harunOvercomingStabilityGap2023}) in the CL literature. We observe that, the smaller the value of $\left\| d\vect{T}(k)  \right\|,$ the closer to zero, the capacity appears to be. Our theoretical result in Theorem~\ref{thm:task_nonstationary_tasks} precisely indicates that each small change in the task leads to a proportional change in the forgetting cost and by extension, the capacity. 

We see this trend also in Fig.~\ref{fig:sine_data}, Panel B, where we plot $d \subsupv{\epsilon}{*}{k}$ with respect to training steps. For each new task, the same behavior as Fig.~\ref{fig:sine_data}, Panel A is observed. Similar to Fig.~\ref{fig:sine_data}, Panel A, the capacity of the network gets worse proportional to $\left\| d\vect{T}(k)  \right\| $~(a conclusion from Theorem~\ref{thm:task_nonstationary_tasks}). 

\begin{wrapfigure}[22]{r}{0.45\textwidth}
\centering
    \includegraphics[width=0.40\textwidth]{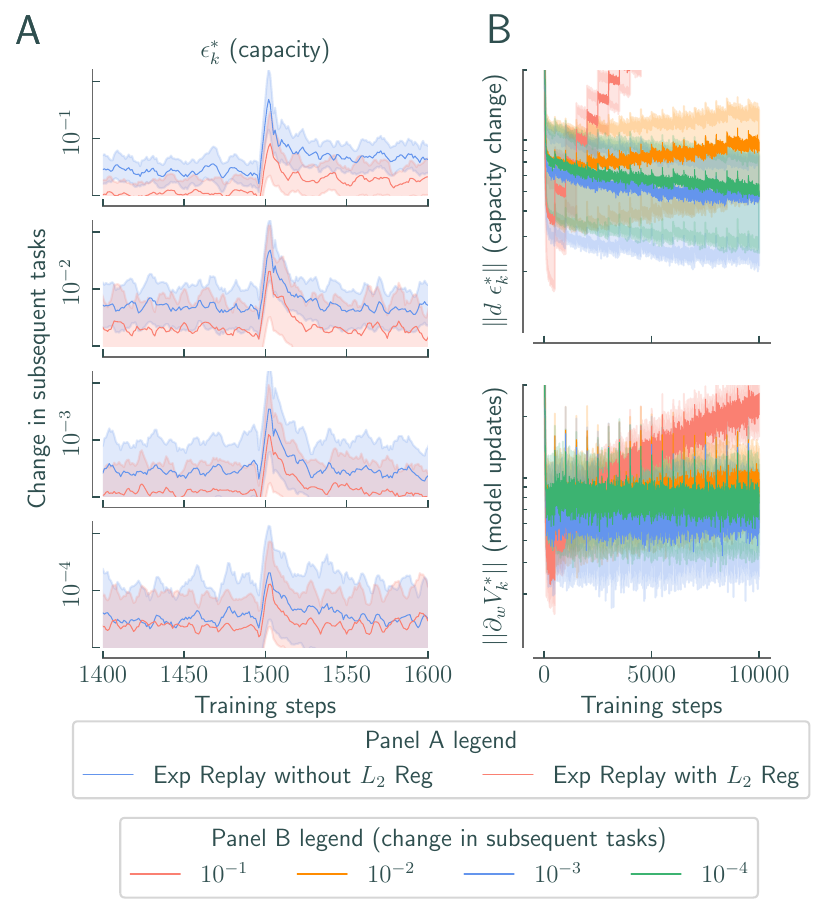}
\caption{A: Forgetting cost with ER and $L_{2}$ regularization; B: capacity; (bottom) the gradient of capacity with respect to weights as a function of training steps under $L_{2}$ regularization with $\|d \epsilon^*_k \| = \|\frac{\partial V^{(*)}}{ \partial \vect{w}}\|+\|\frac{\partial V^{(*)}}{ \partial \vect{x}}\|.$}
    \label{fig:sine_data_2}
\end{wrapfigure}
As seen in Fig.~\ref{fig:sine_data}, Panel B, using vanilla experience replay, which is supposed to compensate for the distribution shift in tasks, also exhibits deterioration in capacity. Moreover, the deterioration is proportional  to $\left\| d\vect{T}(k)  \right\| $ (green is poorer than blue, orange is worse than green, red corresponds to the worst capacity) -- an expected result as per Theorem~\ref{thm:task_nonstationary_weights}. The addition of a regularization factor does seem to improve this behavior as seen in Fig.\ref{fig:sine_data_2}, Panel B. Similarly, Fig.\ref{fig:sine_data_2}, Panel A reinforces that regularization applied to ER improves the slope of the capacity for all values of $\left\| d\vect{T}(k)  \right\|$ (the different rows in Panel A). As shown in Theorem \ref{thm:task_nonstationary_weights}, in spite of regularization, for a large enough $\left\| d\vect{T}(k)  \right\|$ capacity increases drastically (the red curve corresponding to $10^{-01}$ increases very fast) as in Fig.\ref{fig:sine_data_2}, Panel B.

\noindent \textbf{Case Study 2: Convolutional NNs}
\emph{Setup:} We now use the Omniglot dataset~\cite{cumminsComparativeStudyContinual2023, beaulieu2020learning} which is commonly used in continual~\cite{beaulieu2020learning}, and meta continual learning problems~\cite{javed2019meta} because of the presence of large numbers of tasks in contrast to the MNIST and CIFAR datasets, that are mostly image recognition datasets. We create a total of 10 classes and sequentially expose the CNN to one class at a time under the incremental class learning paradigm~\cite{lomonaco2021avalanche}.
% \noindent \textbf{Experiental Settings:} For this experiment, we use a simple convolutional NN with one convolutional layer with a maxpool layer. 
% In addition, this network has two more feed forward layer with 10 classes in the output with softmax cost function. In particular, we use a total of ten task with each task referring to a particular class in the omniglot dataset. For the feedfrorward layers we use 512 and 64 neurons respectively. For the convolutional layer, we use a $3 \times 3$ kernel. As mentioned earlier, our goal is not to demonstrate best performance and therefore, no hyperparameter tuning is performed. For optimization we use the adamw~\KR{cite} optimizer with a learning rate of 1e-04. Cross Entropy is utilized as a cost function and we train for a total of 500 epochs for each task.  We learn the ten tasks we generated for each of these four values of $\Delta~x$ and run each experiment for a total of $10$ repetitions.i
 
 \emph{Analysis:} Overall, all the conclusions from the previous case study does carry forward. The stability gap~\cite{harunOvercomingStabilityGap2023} phenomenon is seen in Fig. \ref{fig:omniglot}, Panel A. The continuously deteriorating capacity that was observed in Fig.~\ref{fig:sine_data} for large noise values are not observed here because, there is no artificial noise being introduced here. In fact, the top plot in Fig. \ref{fig:omniglot}, Panel B shows a very stable learning behavior. However, on careful analysis, one can observe that the amount of weight updates required to attain this learning behavior keeps increasing~(bottom plot in Fig. \ref{fig:omniglot}, Panel B). This increasing requirement for larger and larger weight updates results in steady deterioration in capacity, as the model is unable to reduce the forgetting cost back to the same level for incoming tasks.

\begin{wrapfigure}[20]{r}{0.45\textwidth}
\centering
\includegraphics[width=0.40\textwidth]{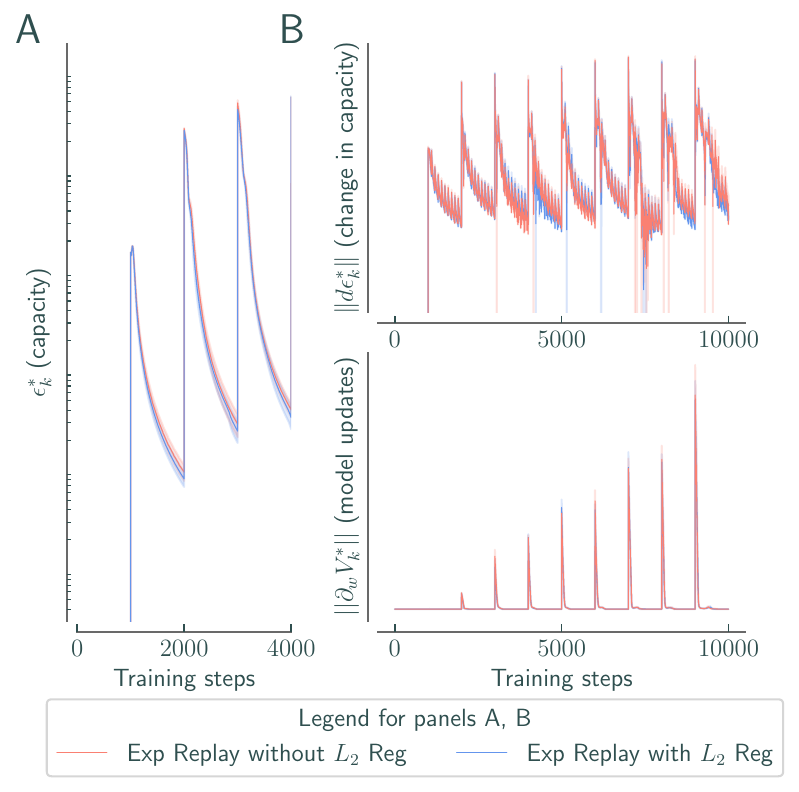}
\caption{ Panel A: Evolution in capacity with training steps. Panel B: (top)~Change in capacity and (bottom)~the gradient of capacity.}
% $\|d \epsilon^*_k \| = \|\frac{\partial V^{(*)}}{ \partial \vect{w}}\|+\|\frac{\partial V^{(*)}}{ \partial \vect{x}}\|.$}
\label{fig:omniglot}
%\vspace{-1mm}
\end{wrapfigure}

This can be observed in Fig. \ref{fig:omniglot}, Panel A where capacity at step $2000$ is better than that at step $3000$ which is better than that at step $4000$, and this is also our main contention in Theorems~\ref{thm:task_nonstationary_tasks} and \ref{thm:task_nonstationary_weights}. Although, deteriorating capacity was easier to observe in the synthetic dataset, even for a real world benchmark CL problem (with no additional noise), the theoretical results are indeed valid. 
 
 %Herein, the expected result was an improvement using the regularization. However, it is to be noted that, for many different datasets, particular type of regularization does not have any major impact on the training behavior. 
%  We do not observe any significant impact of regularization on training behavior. However, even with regularization, the forgetting cost indeed grows and the model becomes unusable, a result clearly observed from Theorem~\ref{thm:task_nonstationary_weights}.
% \newpage

\noindent \textbf{Case Study 3: Graph NN}
\emph{Setup:} We generate a total of $10$ tasks using the PyTorch geometric library~\cite{fey2019fast} with each task comprising of $4$ randomly sampled classes from a $10$-class classification problem. The key feature of this synthetic data is that both the node and edge features change. We serially feed these tasks to the GNN and train for $500$ steps each. 

% \emph{Analysis:} In this study, we again show that, the larger the distribution shift across subsequent tasks, by virtue of change in the node features $\frac{\partial V^{(*)}}{ \partial \vect{x}}$ and edge features $\frac{\partial V^{(*)}}{ \partial \vect{\phi}}$, the larger the model has to adapt to this shift. This adaptation is reflected in large changes in model weights $\frac{\partial V^{(*)}}{ \partial \vect{w}}$. We observe this in Fig.~\ref{fig:graph}, where the norm of the gradient used to change the tasks increases with each subsequent tasks. More specifically, where there is a large spike in the edge or node features~(observed around 4000 step), there is a large update in the weights as well. Moreover, the size of the jump corresponding to weight updates increases with subsequent tasks. 

 \begin{wrapfigure}[21]{r}{0.45\textwidth}
    \centering
    \includegraphics[width=0.44\textwidth]{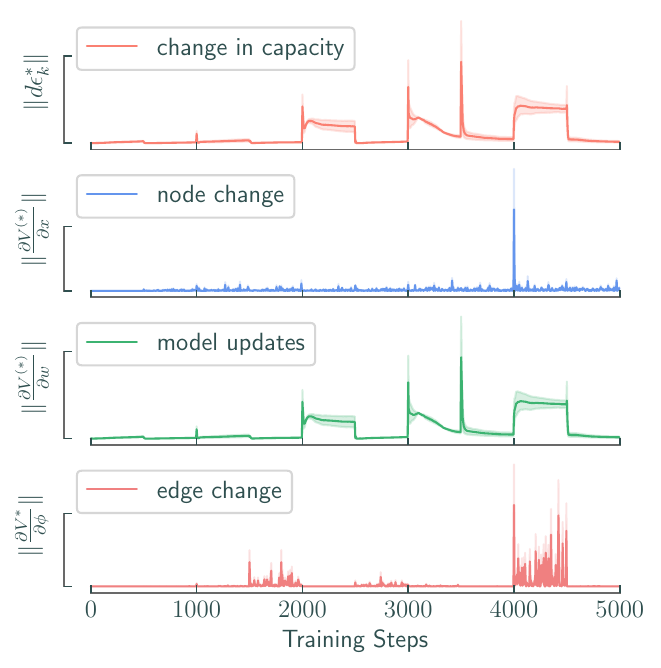}
    \caption{The evolution of capacity for graphs. The change in capacity is the sum of the node changes, model updates and edge changes.}
    \label{fig:graph}
\end{wrapfigure}
\emph{Analysis:} In this study, we again analyze the capacity deterioration from the perspective of data distribution shift due to incoming tasks which we summarize in Fig.~\ref{fig:graph}, in the top panel, we show the change in $\|d\epsilon^*_k\|$ (capacity change) for each subsequent tasks. By 
Lemma~\eqref{eq:firstDiff}, $\|d\epsilon^*_k\|$ is approximately the sum of $\|\frac{\partial V^{(*)}}{ \partial \vect{w}}\|$,  $\|\frac{\partial V^{(*)}}{ \partial \vect{x}}\|$ and $\|\frac{\partial V^{(*)}}{ \partial \vect{\phi}}\|$. To observe what introduces the change in capacity, we provide a more granular breakdown  $\|d\epsilon^*_k\|$ by contrasting it with corresponding changes in the input data. We observe that large changes in $\|d\epsilon^*_k\|$, are explained by corresponding large changes in model weights $\frac{\partial V^{(*)}}{ \partial \vect{w}}$ which is directly guided by the change in the tasks $x$ and $\phi.$  More specifically, where there is a large spike in the edge or node features~(around step $4000$), there is a large update in the weights and correspondingly in $\|d\epsilon^*_k\|$ as well. The size of the jump increases with subsequent tasks.

\noindent \textbf{Case Study 4: Transformer-based Large Language Models (8M and 134M parameters)}

\emph{Setup:} We utilize four sub-datasets (\texttt{wiki} $\rightarrow$ \texttt{git} $\rightarrow$ \texttt{arxiv} $\rightarrow$ \texttt{books}) from the RedPajama 1T tokens dataset~\cite{together2023redpajama} for both pre- and continual pre-training. We use the LLama2 tokenizer~\cite{touvron2023llama} and decoder model architecture~\cite{touvron2023llama} to construct models with 8M and 134M parameters (details in Appendix). Pre-training was done with a batch size of approx. 4M tokens for 48K steps (about 200B tokens), and a 2K-step linear warmup.  For CL, we conduct two experiments: one without ER, using data from only the current task, and another with ER, mixing 80\% current task data with 20\% from previous tasks (details on data mix in Appendix F). Each task is trained for 12K steps (about 50B tokens), starting each new task from the previous task's final checkpoint. Validation scores are computed on the \texttt{C4-en} validation set~\cite{c4} using the final checkpoint for each task. We use identical hyper-parameter settings for both models and leverage PyTorch FSDP~\cite{fsdp} on $64$ A10 (40GB) GPUs.

\vspace{-4mm}
\begin{figure*}[h]
    \centering
    \includegraphics[width=0.8\textwidth]{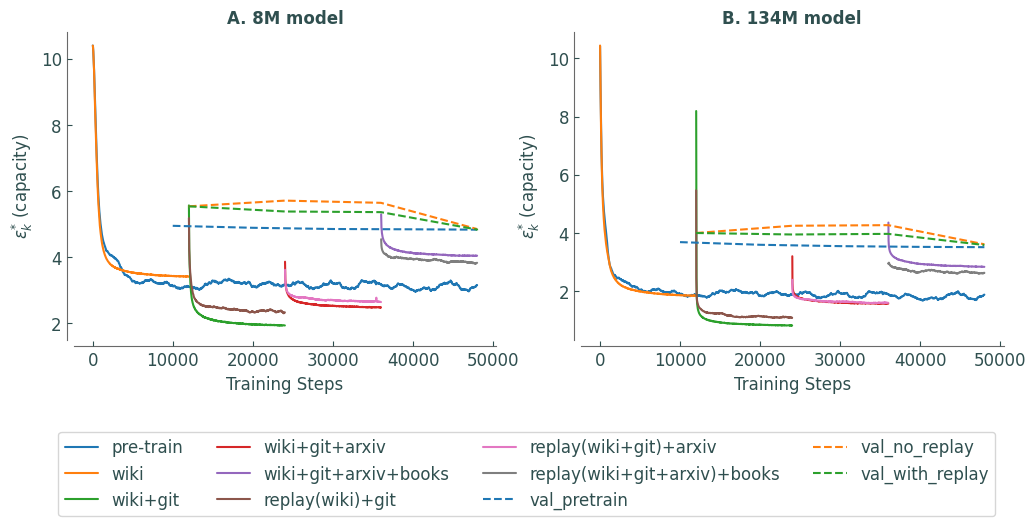}
    \caption{CL on language models demonstrate that forgetting cost increases as new tasks arrive both with and without ER. As expected, the 134M model has higher effective capacity than the 8M model.}
    \label{fig:llm_data}
\end{figure*}

% \noindent \textbf{Observations:} 

\emph{Analysis:} We compare capacity (measured using its upper bound which is the forgetting cost) for continual pre-training with/ without ER of 8M and 134M parameter models in Fig.~\ref{fig:llm_data}. Pre-training capacity is shown for reference.

\textit{8M model:} Without ER, we see that the capacity initially goes down for the second task (\texttt{git}) but then keeps increasing with the arrival of each new task (\texttt{arxiv} followed by \texttt{books}). This is an expected baseline result~\cite{ramasesh2021effect} and indicates forgetting.
Even with ER, we observe an increase in capacity as new tasks arrive. This is a consequence of Theorem~\ref{thm:task_nonstationary_weights}, as the model needs to learn concepts from a mix of data from multiple tasks. The only exception occurs for the \texttt{books} task, where the cost observed with ER is lower than without ER. We attribute this to initialization bias (i.e., optimal solution from the previous task is a good initialization for the current task). This can also be inferred from Theorem \ref{thm:task_nonstationary_tasks}, where more similarity in task leads to better learning- an effect shown theoretically in \cite{lin2023theory} as well.
% The capacity measured on the validation dataset is lower with ER than without it due to improved generalization.
%compared to experiments without ER. This is because ER allows the model to generalize better without forgetting, %which is to be expected as experience replay is supposed to achieve this. 

For reference, we add the pre-training capacity curve where all tasks are available together. Initially, the learning objectives~(both with and without ER) are relatively easier and therefore task capacity is lower than the overall pre-training capacity. However, as more tasks arrive the capacity eventually becomes higher than the pre-training capacity because the models keeps on forgetting even with ER~(Theorem \ref{thm:task_nonstationary_weights}). The validation capacity for pre-training model is always lower than both with and without ER indicating that the pre-trained models' forget less and generalize better.

\textit{134M model:} Similar to the 8M model, we observe an increase in capacity as new tasks arrive. However, owing to larger scale, the capacity values are still relatively lower than the 8M model-- an expected result, as larger models are more resilient to small changes in the tasks due to the number of parameters to help with adaptation.

\section{Discussion and Limitations}
A critical research gap in CL has been the static treatment of capacity. Existing methods define capacity in terms of fixed parameters, such as the number of neurons or data points, neglecting its dynamic evolution across tasks. However, in reality, CL exhibits a recursive dependence between weights and tasks. Our work redefines capacity as a dynamic quantity that evolves temporally, influenced by tasks, weights, and architectural adjustments. This characterization through dynamical system tools provides the mathematical constructs that will allow us to compensate for change in capacity. For instance, one could measure the value of capacity and use it as part of the optimization problem to make continual learning, ``capacity conscious''. Mathematically, this will involve adding capacity constraints into the optimization problem such that \eqref{eq:CL_opt} to achieve 
\begin{align}\label{eq:CL_cap_constraint} \tag{CC}
\supv{V}{*}(u_k) =  \underset{ u_k}{ min } \quad {\sum}_{i = k}^{K} \left[  J_{\subv{\vect{w}}{i}}( \subv{ \mathbf{T}}{i}  )  \right], \quad \text{subject to}  \quad \subsupv{\epsilon}{*}{k} -\subsupv{\epsilon}{*}{k+1} \leq \gamma.
\end{align}
In other words, we want to \textit{solve the CL problem such that the change in capacity is marginal}. This change in capacity is a function of  changes in tasks ($dT$) and weights ($d~w$), which could be measured through simple algorithmic differentiation procedure. Therefore,  our framework not only provides the avenue to study the relationship between stability gaps and loss trends in CL as described in Fig.~\ref{fig:sine_data} and Fig.~\ref{fig:omniglot} but also explains and allows the flexibility to control the learning behavior. However, the effective control of this learning behavior now depends on the constraint on capacity in \eqref{eq:CL_cap_constraint} which is dynamic and upper-bounded by $\gamma.$ 

By colloquial ML wisdom, capacity is intricately dependent on the size or architecture of the model. We showed in Theorems~\ref{thm:task_nonstationary_weights} and \ref{thm:task_nonstationary_tasks} that, with the use of traditional CL approaches, $\gamma$ could diverge. Therefore, success in CL would need the practitioner to control $\gamma$ which requires the practitioner to determine `` how should I  change the architecture to minimize the change in capacity?'' Such a construct is feasible within our approach by connecting Lemma \ref{lem:Der_capacity} and $\gamma,$ that is, bounding the right hand side of Lemma \ref{lem:Der_capacity}. Notably, no available approach in the CL literature provides the framework to enable this flexibility even though some heuristics are available~\cite{touvron2023llama}. 

In addition, task ordering dictates the dynamic evolution of the CL problem and by extension that of the capacity. Typically, in CL, the sequence of tasks are unknown apriori and one cannot replay tasks repeatedly. Therefore, the impact of task ordering can only be understood through the dependence of new task on prior tasks which is implicitly modeled as the CL problem in \eqref{eq:CL_opt} (as a sum over all past and the future tasks). In particular, the dynamic evolution of capacity at $k$ reflects ``the effect of present task on capacity and is conditioned by a fixed sequence of prior tasks.'' Naturally, a different sequence of prior tasks would result in a different evolution of capacity captured within our framework through dynamic programming principles.

%In addition to such novel perspectives and uses, our mathematical infrastructure enables novel analyses, previously infeasible with traditional approaches. Within our purview, 
Finally, within our mathematical infrastructure the contributions of inseparable components such as tasks, weights and architecture can be understood  by connecting the high-level notion of capacity evolution to low-level optimization dynamics, offering new insights into how optimization strategies (e.g., ADAM vs. SGD) impact learning outcomes. Our framework allows the possibility to analyze both transient dynamics (e.g., short-term forgetting due to large task shifts) and long-term trends (e.g., cumulative forgetting over multiple tasks). These ideas jointly enable a comprehensive understanding of the existing challenges in CL -- a precursor to efficient CL.

\section{Conclusion}
By redefining capacity as a dynamic quantity linked to tasks and weights through recursive equations, our work provides a solid theoretical foundation (with empirical validation) for understanding and addressing CL problems. Our main conclusion is that even if each subsequent task is only slightly different from the previous one, the capacity eventually diverges, rendering the model unusable.  Future research, based on this mathematical framework can explore critical questions such as the impact of specific task ordering, model scale, and optimization technqiues on forgetting dynamics and capacity evolution. Addressing these requires extensive experimentation and a shift to dynamic frameworks, as static models are insufficient for capturing the evolving nature of CL problems.

\begin{ack}
This work is supported  by the U.S. Department of Energy for the SciDAC 5 RAPIDS institute and the 
DOE Early Career Research Program award. This research used resources of the Argonne Leadership Computing Facility, 
which is a DOE Office of Science User Facility supported under Contract DE-AC02- 06CH11357.
\end{ack}

\bibliographystyle{plain}
\bibliography{reference}

%%%%%%%%%%%%%%%%%%%%%%%%%%%%%%%%%%%%%%%%%%%%%%%%%%%%%%%%%%%%
\appendix
\section{Technical Appendices and Supplementary Material}
Technical appendices with additional results, figures, graphs and proofs may be submitted with the paper submission before the full submission deadline (see above), or as a separate PDF in the ZIP file below before the supplementary material deadline. There is no page limit for the technical appendices.
\section*{Supplementary Files}
We will begin by restating some preliminaries, these are the exact copy of the initial text in Section~\ref{sec:clemc} of the paper.
\section{Preliminaries}
%% LaTeX2e file `prelim.tex'
%% generated by the `filecontents' environment
%% from source `paper' on 2025/08/11.
%%
    Let $x$ and $y$ be random variables corresponding to input and output probability spaces with support $\vect{\mathcal{X}}$ and $\vect{\mathcal{Y}}$ and $\mathcal{B}(\vect{\mathcal{X}})$ and $\mathcal{B}(\vect{\mathcal{Y}})$ representing the corresponding Borel algebras. Define $\vect{t}$ as a random variable denoting the joint space of $\vect{x} \times \vect{y}$ with a model $f_{(\vect{w}, \vect{h})} : \vect{\mathcal{X}} \rightarrow \vect{\mathcal{Y}}$ being specified using weights $\vect{w}$ and hyperparameters $\vect{h}$. Given compact sets $\vect{\mathcal{W}}$ over $\vect{w}$ and $\vect{\mathcal{H}}$ over $\vect{h},$ the goal is to learn the weights by searching over the hypothesis space $\vect{f} = \{ f_{(\vect{w}, \vect{h})} , \forall \vect{h} \in \vect{\mathcal{H}}, \vect{w} \in \vect{\mathcal{W}} \}$ through a loss function $\ell_{w, h}(\vect{t}).$
In this paper, we will assume that the hyperparameter/architecture is fixed and therefore, will drop the notation $h$ and denote loss simply as $\ell_{w}(t)$. Throughout the paper, we will assume $x_k = x(k)$ and use them interchangeably, and $\mathbf{k} = [1,2,\cdots, k]$. In this context, we characterize the effective model capacity as follows.

\noindent \textbf{Effective Model Capacity:} We will assume that $\ell_{w}(\vect{t})$ is continuous and twice differentiable over the support $\vect{\mathcal{X}} \times \vect{\mathcal{Y}}$ or $\vect{\mathcal{X}}$, and the compact set $\vect{\mathcal{W}}$. Under these assumptions, let $\ell_{min} = \mathcal{O}_{\mathcal{W}}(\vect{T}) =  min_{\vect{w} \in   \vect{\mathcal{W}}} \quad {E}_{ \vect{t} \in \vect{T} } [\ell_{w}( \vect{t})]$ be the optimization procedure with $\vect{T}$ being a dataset of samples $\vect{t}$ with $\vect{T} \subset \mathcal{B}(\vect{\mathcal{T}})$. Then, given the best hyperparameter/architecture configurations,  the optimization procedure $\mathcal{O}_{\mathcal{W}}$ seeks to find the weights $\vect{w}^* \in \vect{\mathcal{W}}$ that minimizes the loss over a dataset. Given this setting, we define the effective model capacity~(the upper/lower bounds derived in the appendix) as the smallest achievable loss value using $\mathcal{O}_{\mathcal{W}}$ that remains unchanged even when additional data or training is used.

\begin{defn*}\textbf{Definition 1} [Effective Model Capacity~(EMC)]
Given $\vect{\mathcal{W}}$ as the weight space and $\vect{T} \in \mathcal{B}(\vect{\mathcal{T}})$ with an optimization procedure $\mathcal{O}_{\vect{\mathcal{W}}}(\vect{T}),$ the EMC of the model $f$ is given as
\vspace{-2mm}
\begin{align*}
    \epsilon = \underset{\vect{T} \in \mathcal{B}(\vect{\mathcal{T}}) }{min}  \quad \big[ \mathcal{O}_{\vect{\mathcal{W}}}(\vect{T}) \big] =\underset{\vect{T} \in \mathcal{B}(\vect{\mathcal{T}})  }{min}  \quad \big[ \underset{ \vect{w} \in \vect{\mathcal{W}}}{ min } \quad \underset{ \vect{t} \in \vect{T} }{E} [\ell_{w}( \vect{t})] \big]
\end{align*}
\end{defn*}
\vspace{-2mm}

 Given a weight set $\subv{\vect{\mathcal{W}}}{k}$, and loss function $\subv{\ell}{\subv{\vect{w}}{k}}(t), t \in \subv{\vect{\mathcal{T}}}{k}$, the model at $k$ is denoted by $f_{\subv{\vect{w}}{k}}$, the goal of CL is to maintain memory of all observed tasks, then, the CL forgetting cost for the interval $\mathbf{k} = [1,k]$ is given as
\vspace{-2.5mm}
\begin{align*}\textstyle  \underset{ \subv{\vect{w}}{k} \in \subv{\vect{\mathcal{W}}}{k} }{ min } J_{\subv{\vect{w}}{k}}(\subv{ \mathbf{T}}{k}) =\underset{ \subv{\vect{w}}{k} \in \subv{\vect{\mathcal{W}}}{k} }{ min } \sum_{i =1}^{k} \gamma_i \left[ \underset{ \vect{t} \in \vect{T}(i)  }{E} \left[  \ell_{\subv{\vect{w}}{k}}( \vect{t}) \right] \right] \; , \; \; \forall \vect{T}(i) \in \subv{\mathbf{T}}{k},
\end{align*}
where, $\gamma$ ensures boundedness of $J_{\subv{\vect{w}}{k}}(\subv{ \mathbf{T}}{k})$ (see \cite{raghavan2021formalizing}, Lemma 1).
For a fixed $\vect{h} \in \vect{\mathcal{H}}$, the complete CL problem is
\vspace{-2mm}
\begin{align*}
\supv{V}{*}(u_k) =  \underset{ u_k}{ min } \quad {\sum}_{i = k}^{K} \left[  J_{\subv{\vect{w}}{i}}( \subv{ \mathbf{T}}{i}  )  \right],  u_{k} = \{ \subv{\vect{w}}{i}, i= k,k+1, \cdots K \}
\end{align*}
\vspace{2mm}

\noindent \textbf{CL Effective Model Capacity and Balance Point:}
For ease of exposition, we begin by stating

\begin{defn*}\textbf{Definition 2} [Forgetting Effective Model Capacity~(FEMC)]
For task $k \in [1, K]$, dataset $\mathbf{T}_k$, weight space $\mathcal{W}_k$, optimization procedure $\mathcal{O}_{\mathcal{W}_k }( \mathbf{T}_k)$, \ref{eq:EMC} at $k$, $\epsilon_k = min_{\mathbf{T}_{k}, \subv{\vect{w}}{k}} J_{\subv{\vect{w}}{k}}(\mathbf{T}_{k})$, we define FEMC at task $k$ as:
\vspace{-2mm}
\begin{align*}
      \text{FEMC}(k) = \max_{\mathbf{k}} \epsilon_{\mathbf{k}} = \max \{ \epsilon_1, \epsilon_2,  \cdots, \epsilon_k \} \end{align*}
\end{defn*}
\vspace{-2mm}

$FEMC(k)$ at each $k$ is defined by the highest forgetting loss in the interval $[1,k].$ We now define CL effective model capacity as follows.

\begin{defn*} \textbf{Definition 3} [Effective Model Capacity for CL~(CLEMC)]
%\noindent\textbf{Definition 3} [Effective Model Capacity for CL~(CLEMC)]
For a task $k \in [1, K]$, we define CLEMC as the sum of FEMC across all possible tasks as
\vspace{-2mm}
\begin{align*}
      \subsupv{\epsilon}{*}{k} =\sum_{i =k}^{K} \text{FEMC}(i) = \sum_{i =k}^{K} \max_{\mathbf{i}}\; \epsilon_{\mathbf{i}}\end{align*}
\end{defn*}
\vspace{-2mm}

\section{Visualization of CLEMC Formulation}
The visualization of the recursive dynamic-program for CL is shown in Figure~\ref{fig:cap}. The value function and its progressive evolution is shown in black. At each task $k$, the value function considers all the previous tasks (arrows adding forgetting costs from prior intervals) and all future tasks (arrows adding forgetting costs from future intervals). The computation of the forgetting cost at task $k$, is also shown as the summation of the loss terms for each of the tasks in the interval $[1,\ldots,k]$ (green lines). Corresponding model weights optimized for tasks are also shown for completeness.
\begin{figure}
    \centering
    \includegraphics[width=\linewidth]{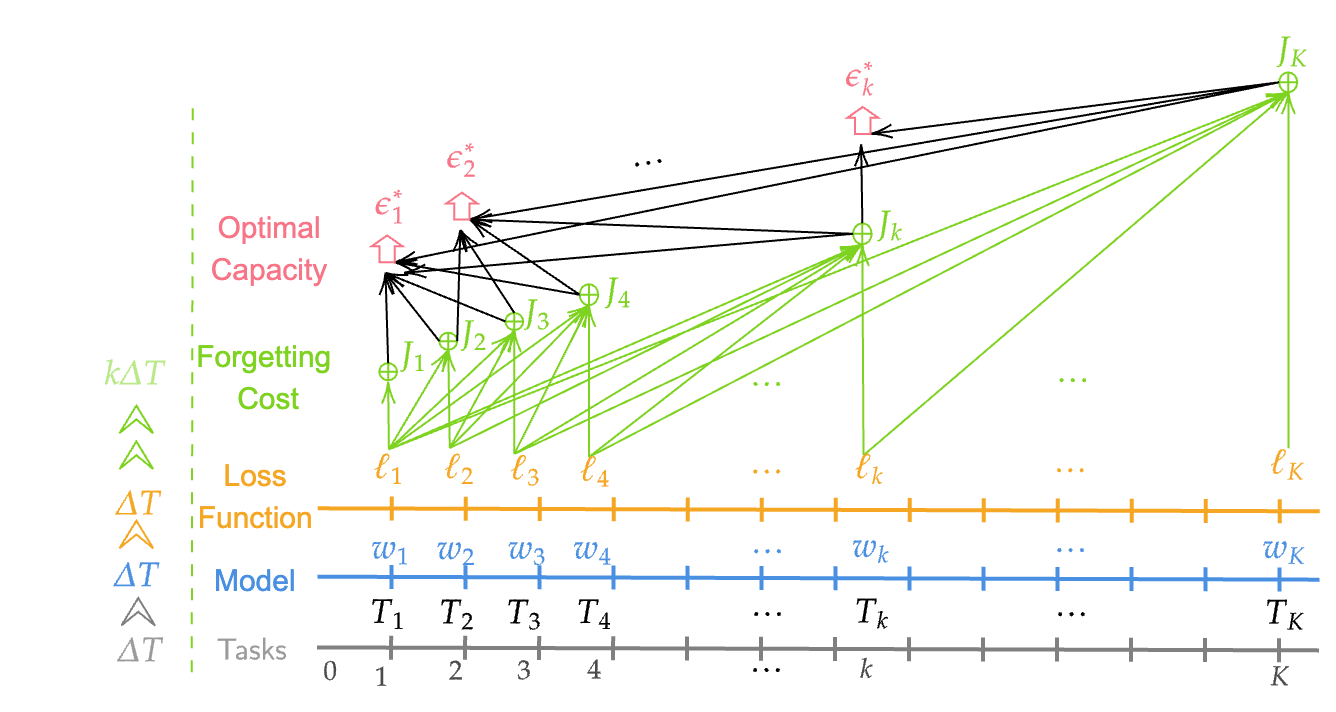}
    %\caption{An illustration of how the forgetting cost, value function and CL effective capacity is calculated.}
    \caption{Visualizing the dynamic-program based CL formulation: Given a model at $w_1$ and a task a $k$, the forgetting cost, $J_k$, is the sum of loss over tasks $[1, k]$ (green arrows), therefore, $J_2 = J_1+\ell_2, $ $J_3 = J_1+ J_2 + \ell_3$ and so on.  The smallest possible forgetting loss at $k$ provides a effective capacity at k $FEMC$, the sum of these smallest possible $FEMC'$s provides the optimal capacity~(in red).}
    \label{fig:cap}
\end{figure}

\section{First Difference}
We will derive the notion of first difference in capacity as a function of the forgetting cost.

%% LaTeX2e file `lem__FD.tex'
%% generated by the `filecontents' environment
%% from source `paper' on 2025/08/11.
%%
%\begin{lemma*}
\noindent \textbf{Lemma 1.}
    For $k \in [1,K]$, let $u_{k} = \{ \subv{\vect{w}}{i}, i= k,k+1, \cdots K \} $ be weight sequences from $k$ with $\mathcal{U}(k) =\{  \subv{\vect{\mathcal{W}}}{i}, i = k, k+1, \cdots \}$-- the compact sets. Next define  \eqref{eq:CL_forget}, \eqref{eq:CL_opt} and  \eqref{eq:clemc} to write
    \begin{align*}
      \subsupv{\epsilon}{*}{k+1} - \subsupv{\epsilon}{*}{k}
       &= min_{\mathbf{k}} \; \{ \underset{\mathbf{T}_{i}}{max} \;  \{\langle \partial_{\subv{\vect{w}}{k}} \supv{V}{*}(u_{k}) , d \subv{\vect{w}}{k} \rangle  + \sum_{T \in \mathbf{T}_{k}} \langle \partial_{T} \supv{V}{*}(u_{k}) , d T \rangle \}\}
      \end{align*}
%\end{lemma*}
\begin{proof}
We first derive the current forgetting cost as a function of infinitesimal change in $\supv{V}{*}(u_{k})$ in the following technical lemma.
\begin{lemma*}
Consider $k \in [0,K]$ with the forgetting cost as in \eqref{eq:CL_forget} and CL problem in \eqref{eq:CL_opt}. Then,
\begin{align}\label{eq:diff}
 &- \underset{ \subv{\vect{w}}{k}}{ min }  \; J_{\subv{\vect{w}}{k}}( \subv{ \mathbf{T}}{k}  ) = \left\langle \partial_{\subv{\vect{w}}{k}} \supv{V}{*}(u_{k}) , d \subv{\vect{w}}{k} \right\rangle  + \sum_{T \in \mathbf{T}_{k}} \left\langle \partial_{T} \supv{V}{*}(u_{k}) , d T \right\rangle  + \mathcal{O}(2)
\end{align}
where $d$ is the first difference operator, $\partial$ refers to the first derivative  and $\mathcal{O}(2)$ represent the higher order derivative terms.
\end{lemma*}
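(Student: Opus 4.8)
The plan is to exploit the additive, recursive structure of the continual-learning value function and reduce the claim to a single first-order Taylor expansion followed by a cancellation. Starting from the complete CL objective \eqref{eq:CL_opt}, I would first peel off the $i=k$ summand to obtain the Bellman-type recursion $\supv{V}{*}(u_k) = \min_{w_k} J_{w_k}(\mathbf{T}_k) + \supv{V}{*}(u_{k+1})$. This is valid because none of the forgetting costs $J_{w_i}(\mathbf{T}_i)$ with $i \geq k+1$ depend on $w_k$, so by the principle of optimality the outer minimization decouples into the $w_k$-minimization of the current forgetting cost \eqref{eq:CL_forget} plus the optimal cost-to-go of the tail trajectory $u_{k+1}$.

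The pivotal step is to relate $\supv{V}{*}(u_{k+1})$ back to $\supv{V}{*}(u_k)$. I would model the transition from the trajectory $u_k$ to the shorter trajectory $u_{k+1}$ as induced by infinitesimal perturbations $dw_k$ in the leading weight and $dT$ in each task dataset $T \in \mathbf{T}_k$, and expand $\supv{V}{*}$ to first order about $(w_k,\mathbf{T}_k)$:
\[
\supv{V}{*}(u_{k+1}) = \supv{V}{*}(u_k) + \langle \partial_{w_k}\supv{V}{*}(u_k), dw_k\rangle + \sum_{T\in\mathbf{T}_k}\langle \partial_T \supv{V}{*}(u_k), dT\rangle + \mathcal{O}(2).
\]
Substituting this into the recursion, the two copies of $\supv{V}{*}(u_k)$ cancel, and rearranging isolates $-\min_{w_k} J_{w_k}(\mathbf{T}_k)$ as exactly the sum of inner products plus $\mathcal{O}(2)$, which is precisely \eqref{eq:diff}.

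The main obstacle is justifying the Taylor expansion — specifically, that $\supv{V}{*}$ is differentiable in the leading weight $w_k$ and in the data $T$, so that the first-order expansion is legitimate and the remainder genuinely collects into $\mathcal{O}(2)$. Here I would lean on the standing assumptions that $\ell_w(t)$ is continuous and twice differentiable and that the weight sets are compact, which (via an envelope/Danskin-type argument through the inner minimization, as in \cite{bertsekas}) transfers differentiability to the value function. A secondary conceptual subtlety is the interpretation of the first-difference operator $d$: the index shift $k \to k+1$ is discrete, so writing the change as a first-order differential presumes the per-task increments $dw_k$ and $dT$ are small enough that quadratic and higher-order terms are negligible. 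This smallness assumption is exactly the hinge under which dropping $\mathcal{O}(2)$ is warranted, and it is the condition on which the entire differential characterization of capacity ultimately rests.
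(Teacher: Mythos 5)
Your proposal follows the paper's proof essentially step for step: the same Bellman-type recursion $\supv{V}{*}(u_k)=\min_{\subv{\vect{w}}{k}}J_{\subv{\vect{w}}{k}}(\subv{\mathbf{T}}{k})+\supv{V}{*}(u_{k+1})$, the same first-order Taylor expansion of $\supv{V}{*}(u_{k+1})$ about $(\subv{\vect{w}}{k},\subv{\mathbf{T}}{k})$, and the same cancellation of $\supv{V}{*}(u_k)$ to isolate the claimed identity. Your added remarks on justifying differentiability of the value function and on interpreting the discrete index shift as an infinitesimal perturbation are reasonable elaborations of points the paper leaves implicit, but they do not change the route.
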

\begin{proof}
Let $u_{k} = \{ \subv{\vect{w}}{i}, i= k,k+1, \cdots K \} $ be the sequence of weights starting from $k$ with  $\mathcal{U}(k) =\{  \subv{\vect{\mathcal{W}}}{i}, i = k, k+1, \cdots \}\;$ being the sequence of their respective compact sets. Under the assumption that the optimal cost $\supv{V}{*}(u_{k})$ is given by the optimal trajectory of weights $u_{k}$ corresponding to the tasks sets $\{ \subv{\vect{T}}{i} \in \subv{\vect{\mathcal{T}}}{i}, i= k,k+1, \cdots K \}$, we can write the following system of recursive equations

\begin{subequations}
    \begin{align}
 & \;\supv{V}{*}(u_{k}) = \underset{ u_{k}  \in \mathcal{U}_{k}  }{ min } \quad {\sum}_{i = k}^{K} \left[  J_{\subv{\vect{w}}{i}}( \subv{ \mathbf{T}}{i}  )  \right] \label{eq:val_base}\\
 & \supv{V}{*}(u_{k+1}) =  \underset{ u_{k+1}  \in \mathcal{U}_{k+1}  }{ min } \; {\sum}_{i = k+1}^{K} \; \left[   J_{\subv{\vect{w}}{i}}( \subv{ \mathbf{T}}{i}  ) \right]  \label{eq:val_inductive} \\
 & \supv{V}{*}(u_{k})   = \underset{ \subv{\vect{w}}{k}}{ min }  \; J_{\subv{\vect{w}}{k}}( \subv{ \mathbf{T}}{k}  )+ \supv{V}{*}(u_{k+1}) \label{eq:val_recursive}
\end{align}
\end{subequations}
where \eqref{eq:val_base} and \eqref{eq:val_inductive} follow directly from using \eqref{eq:CL_forget} and \eqref{eq:val_recursive} is obtained by simply rewriting \eqref{eq:val_base} using \eqref{eq:val_inductive}.

Now, given two trajectories $u_{k}$ and $u_{k+1},$ the change introduced by $u_{k+1}$ to $\supv{V}{*}(u_{k})$ is given by Taylor series approximation of $\supv{V}{*}(u_{k})$ around $\subv{\vect{w}}{k}$ and  $\subv{ \mathbf{T}}{k}$ as,
 \begin{align}
 \supv{V}{*}(u_{k+1})   =  \supv{V}{*}(u_{k}) + \left\langle \partial_{\subv{\vect{w}}{k}} \supv{V}{*}(u_{k}) , d \subv{\vect{w}}{k} \right\rangle  + \sum_{T \in \mathbf{T}_{k}} \left\langle \partial_{T} \supv{V}{*}(u_{k}) , d T \right\rangle + \mathcal{O}(2) \label{eq:val_taylor} \end{align}
 where $d \subv{\vect{T}}{k}$ and
$d \subv{\vect{w}}{k}$ are the infinitesimal perturbations to data and weights respectively and $\mathcal{O}(2)$ represent higher order derivative terms. Substituting \eqref{eq:val_taylor} into \eqref{eq:val_recursive}  to get
 \begin{align*}
 \cancel{\supv{V}{*}(u_{k})}  = \underset{ \subv{\vect{w}}{k}}{ min }  \; J_{\subv{\vect{w}}{k}}( \subv{ \mathbf{T}}{k}  ) + \cancel{\supv{V}{*}(u_{k})} + \left\langle \partial_{\subv{\vect{w}}{k}} \supv{V}{*}(u_{k}) , d \subv{\vect{w}}{k} \right\rangle  + \sum_{T \in \mathbf{T}_{k}} \left\langle \partial_{T} \supv{V}{*}(u_{k}) , d T \right\rangle   + \mathcal{O}(2)    \end{align*}
 which proves the result stated in the technical Lemma.
 \end{proof}
 Using the above result, we can now prove Lemma~\eqref{lem:Der_capacity}. Towards this end, we begin by writing,
\begin{subequations}
    \begin{align}
      \subsupv{\epsilon}{*}{k} &=
 \sum_{i =k}^{K} max_{\mathbf{i}}\; \epsilon_{\mathbf{i}}  = max_{\mathbf{k}} \; \epsilon_{\mathbf{k}} + \subsupv{\epsilon}{*}{k+1} \label{eq:lemma1_recursive} \\
      \subsupv{\epsilon}{*}{k+1} - \subsupv{\epsilon}{*}{k} &= min_{\mathbf{k}} \{ \; -\epsilon_{\mathbf{k}} \}
      = min_{\mathbf{k}} \{ \; -\{ \underset{\mathbf{T}_{i}}{min} \; \underset{ \subv{\vect{w}}{i}}{ min }  \; J_{\subv{\vect{w}}{i}}( \subv{ \mathbf{T}}{i})\}, \; i \in \mathbf{k}\} \label{eq:lemma1_JF} \\
      &= min_{\mathbf{k}} \; \{ \underset{\mathbf{T}_{i}}{max} \;  (-\underset{ \subv{\vect{w}}{i}}{ min }  \; J_{\subv{\vect{w}}{i}}( \subv{ \mathbf{T}}{i})),\;  i\in \mathbf{k}\} \label{eq:lemma1_diff}
      \end{align}
\end{subequations}
where \eqref{eq:lemma1_recursive} is obtained by applying \eqref{eq:clemc}, and \eqref{eq:lemma1_JF} is obtained by rewriting $\epsilon_{\mathbf{k}}$ using \eqref{eq:CL_forget}.  Substituting \eqref{eq:diff} into \eqref{eq:lemma1_diff}, and ignoring the higher order derivative terms denoted by $\mathcal{O}(2)$~\cite{bertsekas}, we obtain the result as
\begin{subequations}
    \begin{align}
      &\subsupv{\epsilon}{*}{k+1} - \subsupv{\epsilon}{*}{k}
      = min_{k \in \mathbf{k}} \; \{ \underset{\mathbf{T}_{i}}{max} \;  (-\underset{ \subv{\vect{w}}{i}}{ min }  \; J_{\subv{\vect{w}}{i}}( \subv{ \mathbf{T}}{i}) ), i \in \mathbf{k}\} \\
       &= min_{k \in  \mathbf{k}} \; \{ \underset{\mathbf{T}_{i}}{max} \;  \{ \left\langle \partial_{\subv{\vect{w}}{k}} \supv{V}{*}(u_{k}) , d \subv{\vect{w}}{k} \right\rangle  + \sum_{T \in \mathbf{T}_{k}} \left\langle \partial_{T} \supv{V}{*}(u_{k}) , d T \right\rangle \}
      \end{align}
\end{subequations}

\end{proof}

Next, we will derive the lower bound on the first difference in capacity which stems from a lower and upper bound on capacity.

\section{Lower bound on first difference}
%% LaTeX2e file `LBFD.tex'
%% generated by the `filecontents' environment
%% from source `paper' on 2025/08/11.
%%
\noindent \textbf{Theorem 1.}
%\begin{thm*}
The first difference in CLEMC \eqref{eq:firstDiff} is lower bounded as \vspace{-2mm}
\begin{align*}
\subsupv{\epsilon}{*}{k} -\subsupv{\epsilon}{*}{k+1}
    &\geq  \underset{k \in \mathbf{k}}{max} \; \{ \underset{\mathbf{T}_{i}}{min} \;  \{ \| \partial_{w_k}  J_{w^*_k}( \subv{ \mathbf{T}}{i}) \|  \|d w^*_k \| \nonumber \\
    &+\sum_{T(k) \in \mathbf{T}_i}  \sum_{i=k}^{K} \|  \partial_{T(k)} E_{t \in T(i)} \ell_{w^*_i}(t) \| \|d T(k)\| \} \}, \nonumber
    \end{align*}
%\end{thm*}

    \begin{proof}
From Lemma~\eqref{lem:Der_capacity} we get
\begin{subequations}
\begin{align}
&\subsupv{\epsilon}{*}{k+1} -\subsupv{\epsilon}{*}{k}=
    min_{k \in \mathbf{k}} \; \{ \underset{\mathbf{T}_{k}}{max} \;  \{ \left\langle \partial_{w_k} \supv{V}{*}(u_{k}) , d w_k \right\rangle + \sum_{T \in \mathbf{T}_{k}} \left\langle \partial_{T} \supv{V}{*}(u_{k}) , d T \right\rangle \} \} \nonumber \\
    &\leq  min_{k \in \mathbf{k}} \; \{ \underset{\mathbf{T}_{k} }{max} \;  \{ \| \partial_{w_k} \supv{V}{*}(u_{k}) \| \|d \subv{\vect{w}}{k} \| + \sum_{T \in \mathbf{T}_k} \| \partial_{T} \supv{V}{*}(u_{k}) \| \|d T \| \} \} \label{eq:th1_cauchy}
\end{align}
\end{subequations}
where \eqref{eq:th1_cauchy} is obtained using Cauchy-Schwarz inequality, $\langle a , b \rangle \leq \| a\| \| b\|$. We then bound  both the gradient norm terms in \eqref{eq:th1_cauchy} as follows.

For the first gradient norm term, we assume that the optimal cost, $\supv{V}{*}$, is given by the weight trajectory $u_k$ with $ u_k = \{ w_i, i= k, k+1, \cdots K \}$. We can then bound it through the following inequalities.
\begin{subequations}
\begin{align}
 \|\partial_{w_k}\supv{V}{*}(u_{k}) \|
 & = \; \;   \|\partial_{w_k} \min_{u_{k}} \sum_{i=k}^{K} J_{w_i}( \subv{ \mathbf{T}}{i}) \| \\
 & \leq  \; \; \|\partial_{w_k}\sum_{i=k}^{K} min_{w_i} J_{w_i}( \subv{ \mathbf{T}}{i}) \| \label{eq:th1_ineq11} \\
 & \leq \;\; \|\sum_{i=k}^{K} \partial_{w_k} min_{w_i} J_{w_i}( \subv{ \mathbf{T}}{i}) \| \label{eq:th1_ineq12} \\
 & \leq \; \; \| \partial_{w_k}  min_{w_k} J_{w_k}( \subv{ \mathbf{T}}{i}) \| \label{eq:th1_ineq13}
\end{align}
\end{subequations}
where \eqref{eq:th1_ineq11} is because the norm of the gradient, with respect to weights, at the optimal cost (due to an optimal trajectory) is always less than the norm of the gradient, with respect to the weights, at a forgetting cost corresponding to any arbitrary weight trajectory. \eqref{eq:th1_ineq12} follows from the sum rule of derivatives and \eqref{eq:th1_ineq13} is because all terms from $w_{k+1}$ onwards vanish due to lack of dependence on $w_k$.

For the second norm of the gradient term in \eqref{eq:th1_cauchy}, we again write the optimal cost $\supv{V}{*}(u_k) = \sum_{i=k}^{K}  min_{w_i} J_{w_i}( \subv{ \mathbf{T}}{i})$ such that $\subv{ \mathbf{T}}{i} = \{\vect{T}(1), \cdots \vect{T}(i)\}.$ We further observe that if the optimal cost is differentiated with respect to  $T(k)$ only the $k^{th}$ term in the inner sum will remain. We can then bound it through the following inequalities.

\begin{subequations}
\begin{align}
\|\partial_{T(k)} \supv{V}{*}(u_k)\|
& \leq \; \; \|\partial_{T(k)}  \sum_{i=k}^{K}  min_{w_i} J_{w_i}( \subv{ \mathbf{T}}{i} ) \|\\
& \leq \; \; \|\sum_{i=k}^{K}  \partial_{T(k)} min_{w_i} \sum_{p=1}^{i} E_{t \in T(p)} \ell_{w_i}(t)\| \\
& \leq \; \; \| \sum_{i=k}^{K}  \partial_{T(k)} E_{t \in T(i)} \ell_{w^*(i)}(t) \| \label{eq:th1_ineq23}
\end{align}
\end{subequations}

Then, upon substituting \eqref{eq:th1_ineq13} and \eqref{eq:th1_ineq23} into
\eqref{eq:th1_cauchy} we get,
\small
    \begin{align}    & \subsupv{\epsilon}{*}{k+1} -\subsupv{\epsilon}{*}{k}
        \leq  \underset{k \in \mathbf{k}}{min} \; \{ \underset{\mathbf{T}_{i}}{max} \;  \{ \| \partial_{w_k}  J_{w^*_k}( \subv{ \mathbf{T}}{i}) \|  \|d w^*_k \| +\sum_{T(k) \in \mathbf{T}_i}  \sum_{i=k}^{K} \|  \partial_{T(k)} E_{t \in T(i)} \ell_{w^*_i}(t) \| \|d T(k)\| \} \},
    \end{align}
where we have replaced the inner minimization problem with respect to weights by the corresponding $w^*.$ Multiplication with $-1$ provides the lower bound as
\small
    \begin{align}    & \subsupv{\epsilon}{*}{k} -\subsupv{\epsilon}{*}{k+1}
        \geq  \underset{k \in \mathbf{k}}{max} \; \{ \underset{\mathbf{T}_{i}}{min} \;  \{ \| \partial_{w_k}  J_{w^*_k}( \subv{ \mathbf{T}}{i}) \|  \|d w^*_k \| +\sum_{T(k) \in \mathbf{T}_i}  \sum_{i=k}^{K} \|  \partial_{T(k)} E_{t \in T(i)} \ell_{w^*_i}(t) \| \|d T(k)\| \} \},
    \end{align}
\end{proof}

This lower bound then leads to the conclusion that capacity is non-stationary and diverges with increase in weight update or divergence between subsequent tasks. This non-stationarity extends to experience replay and experience replay with regularization 

\section{Divergence with respect to weights}
%% LaTeX2e file `thm_non_stati_weights.tex'
%% generated by the `filecontents' environment
%% from source `paper' on 2025/08/11.
%%
\noindent \textbf{Theorem 2.}
%\begin{thm*}
 Fix $k \in \mathbb{N}$ and $I$, the number of weight updates required to obtain the optimal value. Assume that $\| \partial_{w_k}  J_{w^*_k}( \subv{ \mathbf{T}}{i}) \| \geq \Phi_{w}$, $\|  \partial_{T(k)} E_{t \in T(i)} \ell_{w^*_i}(t) \|\geq \Phi_{T}$, and let the smallest value of $min_{T(k)} \|d T(k)\| \geq \Phi_{dT}.$ Let $L, \mathcal{R}$ be the Lipschitz constants for the cost function and the regularization function respectively with $\alpha_{\text{MIN}}$ being the smallest learning rate. Then, $\sum_{k}^{K} d\subsupv{\epsilon}{*}{k}$ diverges as a function of $K$, and $I$ with and without the regularization factor.
%\end{thm*}

    \begin{proof}
We first prove the technical Lemma below.
\begin{lemma*}\label{lem:weight_update}
    Fix $k \in \mathbb{N}$ and let the weights at any task $k$  be updated for a total of $I$ steps. Assume $T(k)$ is provided through a series of batches  such that $T(k) = \{ \subsupv{\vect{t}}{i}{k}, i =1, \cdots,  I\}$ with $\subsupv{\vect{t}}{i}{k}$ be a tensor corresponding to batch of data at the $i^{th}$ step for the $k^{th}$ task, sampled uniformly from the underlying support. For the $i^{th}$ update step of the $k^{th}$ task, let the forgetting cost be denoted by $J_{\subv{\vect{w}}{k}}(\subv{ \mathbf{T}}{k})$, gradient be denoted by $ \subsupv{\vect{g}}{i}{k}$, and learning rate by $\subsupv{\alpha}{i}{k}$. Then,
    \begin{align}
     d \subv{\vect{w}}{k}^* =  - \sumv{i=0}{I-1}  \subsupv{\alpha}{i}{k} \subsupv{\vect{g}}{i}{k}
    \end{align}
\end{lemma*}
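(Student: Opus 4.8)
The plan is to prove this as a straightforward telescoping identity over the gradient-descent trajectory, since the statement is purely a bookkeeping claim about the cumulative weight displacement rather than anything requiring analysis. First I would fix the task index $k$ and adopt the notational convention implicit in the statement, namely $d\subv{\vect{w}}{k}^* = \subv{\vect{w}}{k}^* - \subsupv{\vect{w}}{0}{k} = \subsupv{\vect{w}}{I}{k} - \subsupv{\vect{w}}{0}{k}$, where $\subsupv{\vect{w}}{0}{k}$ is the initial weight (inherited from the previous task's checkpoint) and $\subsupv{\vect{w}}{I}{k}$ is the weight after the optimum is attained in exactly $I$ update steps.

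Next I would write down the single-step update recurrence governing each gradient step, $\subsupv{\vect{w}}{i+1}{k} = \subsupv{\vect{w}}{i}{k} - \subsupv{\alpha}{i}{k}\subsupv{\vect{g}}{i}{k}$ for $i = 0, 1, \ldots, I-1$, where $\subsupv{\vect{g}}{i}{k}$ is the gradient of the forgetting cost evaluated on the batch $\subsupv{\vect{t}}{i}{k}$ and $\subsupv{\alpha}{i}{k}$ is the corresponding learning rate. The key step is then to sum these $I$ recurrences. Writing the equations for $i=0$ through $i=I-1$ and adding them, each intermediate weight $\subsupv{\vect{w}}{1}{k}, \ldots, \subsupv{\vect{w}}{I-1}{k}$ appears once with a plus sign (as the left side of one equation) and once with a minus sign (inside the right side of the next), so the interior terms telescope and cancel. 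What survives is $\subsupv{\vect{w}}{I}{k} - \subsupv{\vect{w}}{0}{k} = -\sumv{i=0}{I-1}\subsupv{\alpha}{i}{k}\subsupv{\vect{g}}{i}{k}$, which by the convention established above is exactly $d\subv{\vect{w}}{k}^* = -\sumv{i=0}{I-1}\subsupv{\alpha}{i}{k}\subsupv{\vect{g}}{i}{k}$, as claimed.

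There is no substantive obstacle here: the result is an elementary telescoping identity, and the only point to watch is the indexing convention---that the sum runs from $i=0$ to $I-1$ and that ``optimal after $I$ steps'' is interpreted as $\subsupv{\vect{w}}{I}{k}$ being the optimal weight, so that $d\subv{\vect{w}}{k}^*$ coincides with the total displacement. The assumptions of uniform batch sampling and the particular form of the forgetting cost $J_{\subv{\vect{w}}{k}}(\subv{\mathbf{T}}{k})$ play no role in the identity itself; they become relevant only afterward, when this displacement is bounded below using the Lipschitz constants $L$ (and, in the regularized case, $\mathcal{R}$) and the minimal learning rate $\alpha_{\text{MIN}}$ in the remainder of Theorem~2.
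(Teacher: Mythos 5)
Your proposal is correct and follows essentially the same route as the paper: both fix the convention $d\subv{\vect{w}}{k}^* = \subsupv{\vect{w}}{I}{k} - \subsupv{\vect{w}}{0}{k}$, write the single-step update recurrence, and sum (telescope) the $I$ equations to obtain the claimed identity. Your added remark that the batch-sampling and forgetting-cost assumptions are inert here and only matter for the subsequent Lipschitz bounds is accurate and consistent with how the paper uses the lemma.
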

% \begin{lemma*}\label{lem:weight_update}
%     Fix $k \in \mathbb{N}$ and let the weights at any task $k$  be updated for a total of $I$ steps. Assume $\subv{\vect{T}}{k}$ is provided through a series of batches  such that $\subv{\vect{T}}{k} = \{ \subsupv{\vect{t}}{i}{k}, i =1, \cdots,  I\}$ with $\subsupv{\vect{t}}{i}{k}$ be a tensor corresponding to batch of data at the $i^{th}$ step for the $k^{th}$ task, sampled uniformly from the underlying support. Therefore, we will define the instantiation of the forgetting cost at the $k^{th}$ task and the $i^{th}$ step as $J_{\subv{\vect{w}}{k}}(\subv{ \mathbf{T}}{k})$  at each task $k$ and $i^{th}$ update step. Denote $ \subsupv{\vect{g}}{i}{k}$ as the gradient at the $i^{th}$ update step for the $k^{th}$ task. Then,
%     \begin{align}
%      d \subv{\vect{w}}{k}^* =  - \sumv{i=0}{I}  \subsupv{\alpha}{i}{k} \subsupv{\vect{g}}{i}{k}
%     \end{align}
% \end{lemma*}
\begin{proof}
Note now that, we abuse notation to define $d\subv{\vect{w}}{k}^* = \subv{\vect{w}}{k}^{*}  - \subsupv{\vect{w}}{0}{k} = \subsupv{\vect{w}}{I}{k}  - \subsupv{\vect{w}}{0}{k}$ assuming that the optimal point is achieved after $I$ updates~(indicated by parenthesis). Then, at any particular update step, we obtain
\begin{align}
\subsupv{\vect{w}}{i+1}{k} & =   \subsupv{\vect{w}}{i}{k} -  \subsupv{\alpha}{i}{k} \subsupv{\vect{g}}{i}{k}
\end{align}
where $\subsupv{\vect{g}}{i}{k} $ is the update gradient at the this step.
\begin{align}
\subsupv{\vect{w}}{i+1}{k} & =   \subsupv{\vect{w}}{i}{k} -\subsupv{\alpha}{i}{k} \subsupv{\vect{g}}{i}{k}
\end{align}
We may now write the sum over the I steps at a
\begin{subequations}
    \begin{align}
        \subsupv{\vect{w}}{1}{k} & =   \subsupv{\vect{w}}{0}{k} -  \subsupv{\alpha}{0}{k} \subsupv{\vect{g}}{0}{k}  \\
        \subsupv{\vect{w}}{2}{k} & =   \subsupv{\vect{w}}{1}{k} -\subsupv{\alpha}{1}{k} \subsupv{\vect{g}}{1}{k} \\
        & \vdots \\
        \subsupv{\vect{w}}{I}{k} & =   \subsupv{\vect{w}}{I-1}{k} -  \subsupv{\alpha}{I-1}{k} \subsupv{\vect{g}}{I-1}{k}
    \end{align}
\end{subequations}
Adding all these terms to write
\begin{align}
d\subv{\vect{w}}{k}^* &=  -\sumv{i=0}{I-1}  \subsupv{\alpha}{i}{k} \subsupv{\vect{g}}{i}{k}
\end{align}
\end{proof}
    Given the first difference in capacity from the technical Lemma above,
    %we write with the  the assumption that $\supv{\gamma}{M}$ is the smallest value of contribution associated with each task
    and under the assumption that $\| \partial_{w_k}  J_{w^*_k}( \subv{ \mathbf{T}}{i}) \| \geq \Phi_{w}$ and $\|  \partial_{T(k)} E_{t \in T(i)} \ell_{w^*_i}(t) \|\geq \Phi_{T}$
    \begin{subequations}
        \begin{align}
            \subsupv{\epsilon}{*}{k} - \subsupv{\epsilon}{*}{k+1}
            &\geq \underset{k \in \mathbf{k}}{max} \; \{ \underset{\mathbf{T}_{i}}{min} \;  \{ \| \partial_{w_k}  J_{w^*_k}( \subv{ \mathbf{T}}{i}) \|  \|d w^*_k \| +\sum_{T(k) \in \mathbf{T}_i}  \sum_{i=k}^{K} \|  \partial_{T(k)} E_{t \in T(i)} \ell_{w^*_i}(t) \| \|d T(k)\| \} \} \nonumber  \\
            &\geq \underset{k \in \mathbf{k}}{max} \; \{ \underset{\mathbf{T}_{i}}{min} \;  \{  \Phi_{w}\|d w^*_k \| +\sum_{T(k) \in \mathbf{T}_i}  \sum_{i=k}^{K} \Phi_{T} \|d T(k)\| \} \} \\
            &\geq \underset{k \in \mathbf{k}}{max} \; \{  \Phi_{w}\|d w^*_k \| + \underset{\mathbf{T}_{i}}{min} \; \sum_{T(k) \in \mathbf{T}_i}  \sum_{i=k}^{K} \Phi_{T} \|d T(k)\| \} \} \\
            &\geq \underset{k \in \mathbf{k}}{max} \; \{  \Phi_{w}\|d w^*_k \| + \; \sum_{T(k) \in \mathbf{T}_i}  \sum_{i=k}^{K} \Phi_{T}  \underset{T(k)}{min} \|d T(k)\| \} \}
        \end{align}
    \end{subequations}
Let the smallest value of $\underset{T(k)}{min} \|d T(k)\| \geq \Phi_{dT},$ then, we can write
        \begin{align}
           \subsupv{\epsilon}{*}{k} - \subsupv{\epsilon}{*}{k+1}
            & \geq \underset{k \in \mathbf{k}}{max} \; \{  \Phi_{w}\|d w^*_k \| + \; \sum_{T(k) \in \mathbf{T}_i}  \sum_{i=k}^{K} \Phi_{T}  \Phi_{dT} \} \\ &\geq \underset{k \in \mathbf{k}}{max} \; \{  \Phi_{w}\|d w^*_k \| \} + \; \underset{k \in \mathbf{k}}{max} \; \{ \sum_{T(k) \in \mathbf{T}_i}  \sum_{i=k}^{K} \Phi_{T}  \Phi_{dT} \}
        \end{align}
    Taking sum from $k$ to K provides with the fact that each $\mathbf{T}_k$ has a total of $k$ sub datasets.
    \begin{align}\small\subsupv{\epsilon}{*}{k} - \subsupv{\epsilon}{*}{K}
            &\geq \sum_{k}^{K} \Bigg[ \underset{k \in \mathbf{k}}{max} \; \{  \Phi_{w}\|d w^*_k \| \} + \; \underset{k \in \mathbf{k}}{max} \; \{ \sum_{T(k) \in \mathbf{T}_k}  \sum_{i=k}^{K} \Phi_{T}  \Phi_{dT} \} \Bigg]\\
            &\geq \sum_{k}^{K} \underset{k \in \mathbf{k}}{max} \; \{  \Phi_{w}\|d w^*_k \| \} + \; \sum_{k}^{K}  \underset{k \in \mathbf{k}}{max} \; \{ \sum_{T(k) \in \mathbf{T}_k}  (K-k) \Phi_{T}  \Phi_{dT} \} \\
            &\geq \sum_{k}^{K} \underset{k \in \mathbf{k}}{max} \; \{  \Phi_{w}\|d w^*_k \| \} + k(K-k)^2 \;   \underset{k \in \mathbf{k}}{max} \; \{ \Phi_{T}  \Phi_{dT} \}
\end{align}
Since, $ \underset{k \in \mathbf{k}}{max} \; \{ \Phi_{T}  \Phi_{dT} \} =   \Phi_{T}  \Phi_{dT}$, $ \underset{k \in \mathbf{k}}{max} \; \{ \Phi_{w}  \Phi_{dw} \} =   \Phi_{w}  \Phi_{dw}$ and $\|d w^*_k \| \geq \Phi_{dw},$ we write
   \begin{subequations}
        \begin{align}
            \subsupv{\epsilon}{*}{k} - \subsupv{\epsilon}{*}{K}
            \geq \sum_{k}^{K} \underset{k \in \mathbf{k}}{max} \; \{  \Phi_{w} \Phi_{dw} \} + k(K-k)^2 \;    \Phi_{T}  \Phi_{dT}\geq \sum_{k}^{K}   \Phi_{w} \Phi_{dw} + k(K-k)^2 \;    \Phi_{T}  \Phi_{dT}
        \end{align}
    \end{subequations}
We will now assume that the changes introduced by the task are bounded over all future and past tasks.
Given that $K>0,k>0, \Phi_{w}>0, \Phi_{T}>0,  \Phi_{dT}>c,$ we obtain
\begin{align}
    \subsupv{\epsilon}{*}{k} - \subsupv{\epsilon}{*}{K}
    &\geq  (K-k)\Phi_{w} \Phi_{dw} + k(K-k)^2 \;    \Phi_{T} c
\end{align}
Now, by assumption that, for each task, the optimal value of weight is obtained after updating the weights for a total of $I$ steps provides
  $ \Phi_{dw} \geq  -\sumv{i=0}{I-1}  \subsupv{\alpha}{i}{k} \subsupv{\vect{g}}{i}{k}  \geq  -\sumv{i=0}{I-1}  \subsupv{\alpha}{i}{k}(-L) \geq  \sumv{i=0}{I-1}  \alpha_{\text{MIN}} L\geq  I \alpha_{\text{MIN}} L.$ Thus, we obtain
\begin{align}
    \subsupv{\epsilon}{*}{k} - \subsupv{\epsilon}{*}{K}
    &\geq  (K-k)\Phi_{w} I \alpha_{\text{MIN}} L + k(K-k)^2 \;    \Phi_{T} c
\end{align}
Then $\subsupv{\epsilon}{*}{k} - \subsupv{\epsilon}{*}{K} $ diverges as a function of $K, k, I, c.$

Similarly, for the case with regularization we may write
  $ d\Phi_{dw} \geq  -\sumv{i=0}{I}  \subsupv{\alpha}{i}{k} \subsupv{\vect{g}}{i}{k}  \geq  -\sumv{i=0}{I-1}  \subsupv{\alpha}{i}{k} -(L+\beta\mathcal{R}) \geq  \sumv{i=0}{I-1}  \alpha_{\text{MIN}} (L+\beta\mathcal{R})\geq  I \alpha_{\text{MIN}} (L+\beta\mathcal{R}),$ where $L, \mathcal{R}$ are the Lipschitz bounds on the gradients and regularizer function respectively and $\beta>0$ is a coefficient. Thus, we obtain
  \begin{align}
    \subsupv{\epsilon}{*}{k} - \subsupv{\epsilon}{*}{K}
    &\geq  (K-k)\Phi_{w} I \alpha_{\text{MIN}} (L+\beta\mathcal{R}) + k(K-k)^2 \;    \Phi_{T} c
  \end{align}
and we observe divergence as a function of $K, k.$
\end{proof}

Finally we show our main result, that is, if a small change is introduced by every task, it accumulate to result in a divergent capacity.
\section{Divergence with respect to tasks}
%% LaTeX2e file `thm_non_stati_tasks.tex'
%% generated by the `filecontents' environment
%% from source `paper' on 2025/08/11.
%%
\noindent \textbf{Theorem 3.}
%\begin{thm*}
  Under the condition of Theorem \ref{thm:task_nonstationary_weights}, let the maximum change in subsequent tasks and weights be given by $ \underset{k \in \mathbf{k}}{max} \; \{ \Phi_{T}  \Phi_{dT} \} = c.$
  Then, the $\sum_{k}^{K} d\subsupv{\epsilon}{*}{k}$ diverges as a function of $K$, and $I$ without any assumptions on the weight updates.
%\end{thm*}

    \begin{proof}
Given the first difference in capacity,
% we write with the  the assumption that $\supv{\gamma}{M}$ is the smallest value of contribution associated with each task
and under the assumption that $\| \partial_{w_k}  J_{w^*_k}( \subv{ \mathbf{T}}{i}) \| \geq \Phi_{w}$ and $\|  \partial_{T(k)} E_{t \in T(i)} \ell_{w^*_i}(t) \|\geq \Phi_{T}$
    \begin{subequations}
        \begin{align}
            \subsupv{\epsilon}{*}{k} - \subsupv{\epsilon}{*}{k+1}
            &\geq \underset{k \in \mathbf{k}}{max} \; \{ \underset{\mathbf{T}_{i}}{min} \;  \{ \| \partial_{w_k}  J_{w^*_k}( \subv{ \mathbf{T}}{i}) \|  \|d w^*_k \| +\sum_{T(k) \in \mathbf{T}_i}  \sum_{i=k}^{K} \|  \partial_{T(k)} E_{t \in T(i)} \ell_{w^*_i}(t) \| \|d T(k)\| \} \} \nonumber  \\
            &\geq \underset{k \in \mathbf{k}}{max} \; \{ \underset{\mathbf{T}_{i}}{min} \;  \{  \Phi_{w}\|d w^*_k \| +\sum_{T(k) \in \mathbf{T}_i}  \sum_{i=k}^{K} \Phi_{T} \|d T(k)\| \} \} \\
            &\geq \underset{k \in \mathbf{k}}{max} \; \{  \Phi_{w}\|d w^*_k \| + \underset{\mathbf{T}_{i}}{min} \; \sum_{T(k) \in \mathbf{T}_i}  \sum_{i=k}^{K} \Phi_{T} \|d T(k)\| \} \} \\
            &\geq \underset{k \in \mathbf{k}}{max} \; \{  \Phi_{w}\|d w^*_k \| + \; \sum_{T(k) \in \mathbf{T}_i}  \sum_{i=k}^{K} \Phi_{T}  \underset{T(k)}{min} \|d T(k)\| \} \}
        \end{align}
    \end{subequations}
Let the smallest value of $\underset{T(k)}{min} \|d T(k)\| \geq \Phi_{dT},$ then, we can write \begin{align}\tiny \subsupv{\epsilon}{*}{k} - \subsupv{\epsilon}{*}{k+1}
            &\geq \underset{k \in \mathbf{k}}{max} \; \{  \Phi_{w}\|d w^*_k \| + \; \sum_{T(k) \in \mathbf{T}_i}  \sum_{i=k}^{K} \Phi_{T}  \Phi_{dT} \} \nonumber \\
            &\geq \underset{k \in \mathbf{k}}{max} \; \{  \Phi_{w}\|d w^*_k \| \}
            + \underset{k \in \mathbf{k}}{max} \; \{ \sum_{T(k) \in \mathbf{T}_i}  \sum_{i=k}^{K} \Phi_{T}  \Phi_{dT} \}
        \end{align}
    Taking sum from $k$ to K provides with the fact that each $\mathbf{T}_k$ has a total of $k$ sub datasets.
   \begin{subequations}
        \begin{align}
            \subsupv{\epsilon}{*}{k} - \subsupv{\epsilon}{*}{K}
            &\geq \sum_{k}^{K} \Bigg[ \underset{k \in \mathbf{k}}{max} \; \{  \Phi_{w}\|d w^*_k \| \} + \; \underset{k \in \mathbf{k}}{max} \; \{ \sum_{T(k) \in \mathbf{T}_k}  \sum_{i=k}^{K} \Phi_{T}  \Phi_{dT} \} \Bigg]\\
            &\geq \sum_{k}^{K} \underset{k \in \mathbf{k}}{max} \; \{  \Phi_{w}\|d w^*_k \| \} + \; \sum_{k}^{K}  \underset{k \in \mathbf{k}}{max} \; \{ \sum_{T(k) \in \mathbf{T}_k}  (K-k) \Phi_{T}  \Phi_{dT} \} \\
            &\geq \sum_{k}^{K} \underset{k \in \mathbf{k}}{max} \; \{  \Phi_{w}\|d w^*_k \| \} + k(K-k)^2 \;   \underset{k \in \mathbf{k}}{max} \; \{ \Phi_{T}  \Phi_{dT} \}
        \end{align}
    \end{subequations}
Since, $ \underset{k \in \mathbf{k}}{max} \; \{ \Phi_{T}  \Phi_{dT} \} = \Phi_{T} \Phi_{dT}$, $ \underset{k \in \mathbf{k}}{max} \; \{ \Phi_{w}  \Phi_{dw} \} =   \Phi_{w}  \Phi_{dw}$ and $\|d w^*_k \| \geq \Phi_{dw},$ we write
   \begin{subequations}
        \begin{align}
            \subsupv{\epsilon}{*}{k} - \subsupv{\epsilon}{*}{K}
            &\geq \sum_{k}^{K} \underset{k \in \mathbf{k}}{max} \; \{  \Phi_{w} \Phi_{dw} \} + k(K-k)^2 \;    \Phi_{T}  \Phi_{dT}
        \end{align}
    \end{subequations}
    Assuming that the changes introduced by the task are bounded over all future and past tasks, i.e., $\Phi_{dT}>c$, we get
    \begin{subequations}
        \begin{align}
            \subsupv{\epsilon}{*}{k} - \subsupv{\epsilon}{*}{K}
            &\geq \sum_{k}^{K}   \Phi_{w} \Phi_{dw} + k(K-k)^2 \Phi_{T}\; c
        \end{align}
    \end{subequations}
Even for a constant change in task, $\subsupv{\epsilon}{*}{k} - \subsupv{\epsilon}{*}{K} $ diverges as a function of $K.$
\end{proof}

\section{Details for Case Study 4}
%% LaTeX2e file `exp_details.tex'
%% generated by the `filecontents' environment
%% from source `paper' on 2025/08/11.
%%
We used the following configuration of a transformer block to instantiate the 8M model.

{Embedding layer}: (32000, 128); {Attention layer}: (k, q, v, o): (128, 128); {MLP layer}: gate\_projection (128, 256), up\_projection (128, 256), down\_projection (256, 128); {Activation function}: SiLU; {Layernorm}: RMSNorm ; {Head layer}: (128, 32000); {Attention heads}: 2 ; {Layers}: 2 {Hidden size}: $128$.

We used the following configuration of a transformer block to instantiate the 134M model.

{Embedding layer}: (32000, 768); {Attention layer}: (k, q, v, o): (768, 768); {MLP layer}: gate\_projection (768, 2048), up\_projection (768, 2048), down\_projection (2048, 768); {Activation function}: SiLU; {Layernorm}: RMSNorm ; {Head layer}: (768, 32000); {Attention heads}: 12 ; {Layers}: 12 {Hidden size}: $768$.

\textbf{Pre-training Data Mix:}
\begin{itemize}
    \item \texttt{wiki}: 0.28
    \item \texttt{git}: 0.28
    \item \texttt{arxiv}: 0.16
    \item \texttt{books}: 0.28
\end{itemize}

\textbf{Experience Replay - Data Mix:}
\begin{itemize}
    \item \texttt{wiki}: 1.0
    \item \texttt{wiki}: 0.2, \texttt{git}: 0.8
    \item \texttt{wiki}: 0.1, \texttt{git}: 0.1, \texttt{arxiv}: 0.8
    \item \texttt{wiki}: 0.06, \texttt{git}: 0.07, \texttt{arxiv}: 0.07, \texttt{books}: 0.8
\end{itemize}

% \section{Anonymized Code}
% Anonymized code for reproducing the experiments is available here: \url{https://anonymous.4open.science/status/Neurips__Capacity-3EE9}

\section{Visualization for deeper understanding of the impact of CL on the LLM models} 

\emph{Setup:} We randomly sampled $64\times64$ parameters (2\% of the MLP parameters for 8M and 0.007\% for 134M) and tracked how their weights changed from the start ($\bf{w}_k^{0}$) to the end of training ($\bf{w}_k^{*}$) for each task $k$. We then correlated this with the capacity in (Fig.~\ref{fig:llm_data}). Note, the weight changes caused by each task correspond to the second term in \eqref{eq:capacity_lb} which is used to characterize capacity. For this example, the last checkpoint from one task serves as the starting point for the next, i.e., $\bf{w}_k^{0} = \bf{w}_{k-1}^{*}$. Although only a small sample of weights was used, repeated trials showed consistent trends.

\begin{figure}[h]
\includegraphics[width=0.5\textwidth]{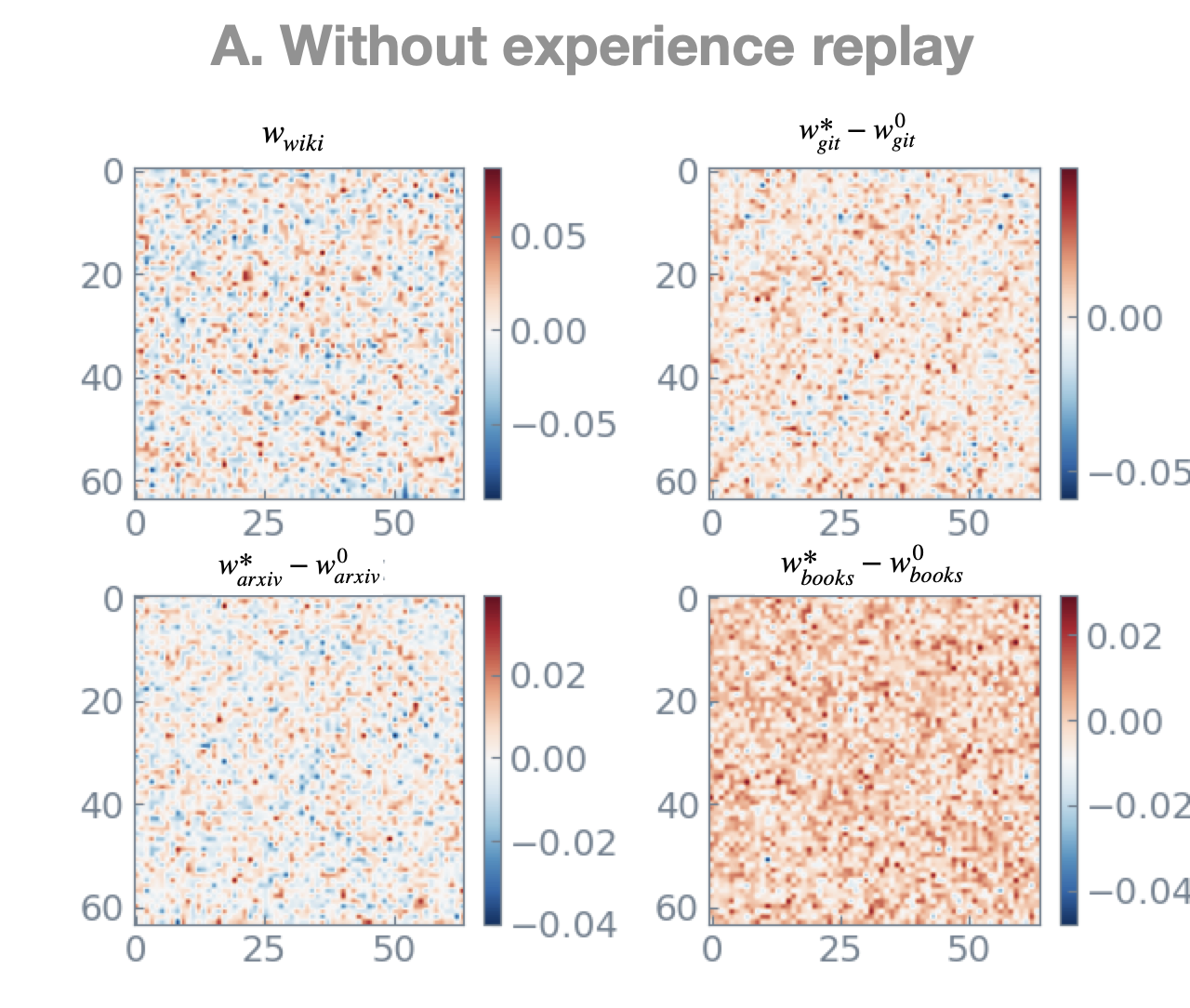}
\includegraphics[width=0.5\textwidth]{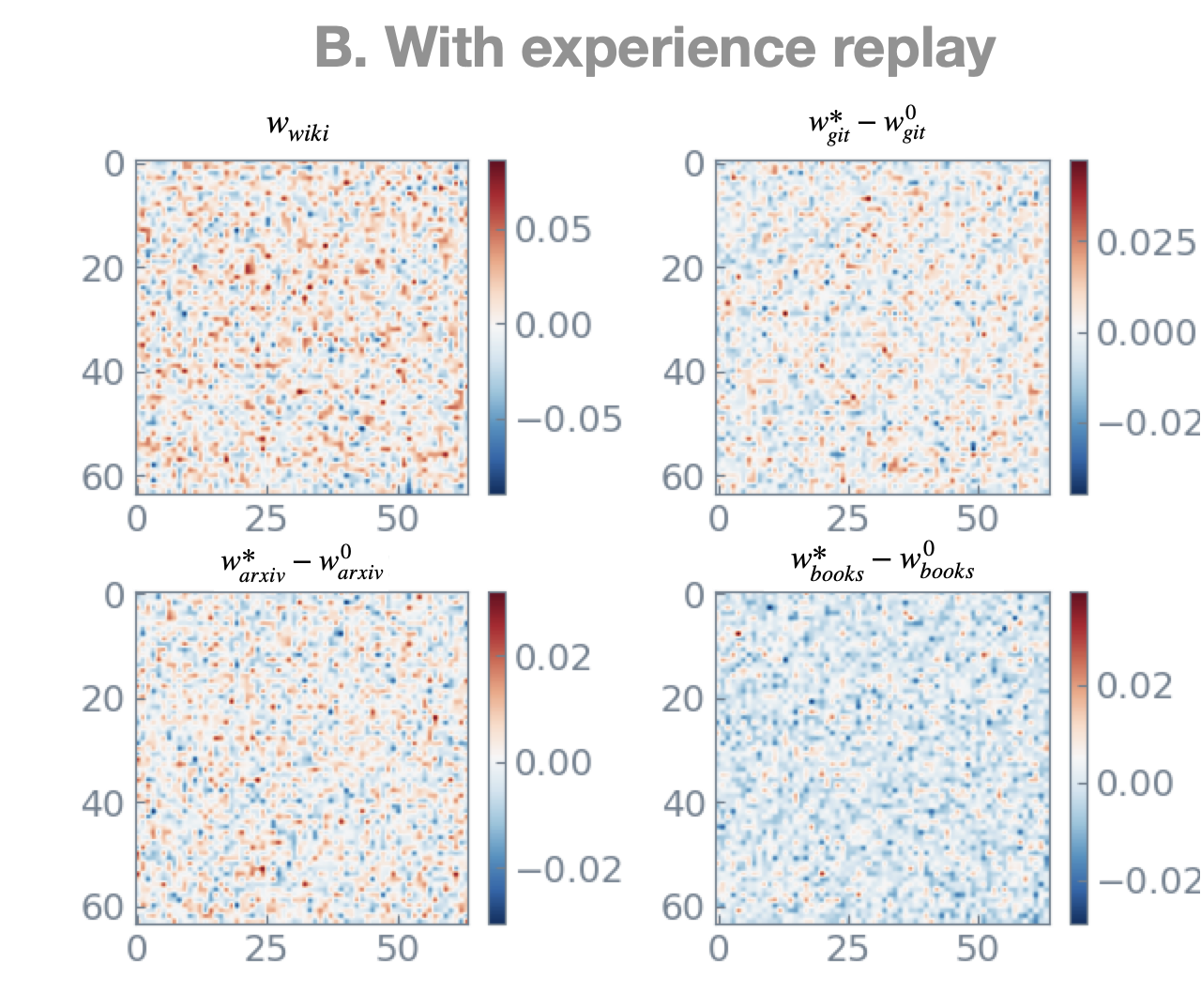}
    \caption{For a task $k$, the $64 \times 64$ heat map shows the difference in weights from the initial value, $\bf{w}_{k}^{0}$, at the start of training to the final value, $\bf{w}_{k}^{*}$, at the end of CL training. The weights are randomly sampled from the MLP sublayers in the 8M parameter model. Task arrival order: \texttt{wiki} $\rightarrow$ \texttt{git} $\rightarrow$ \texttt{arxiv} $\rightarrow$ \texttt{books}.}
    \label{fig:8m_proj}
\end{figure}

\begin{figure}
\includegraphics[width=0.5\textwidth]{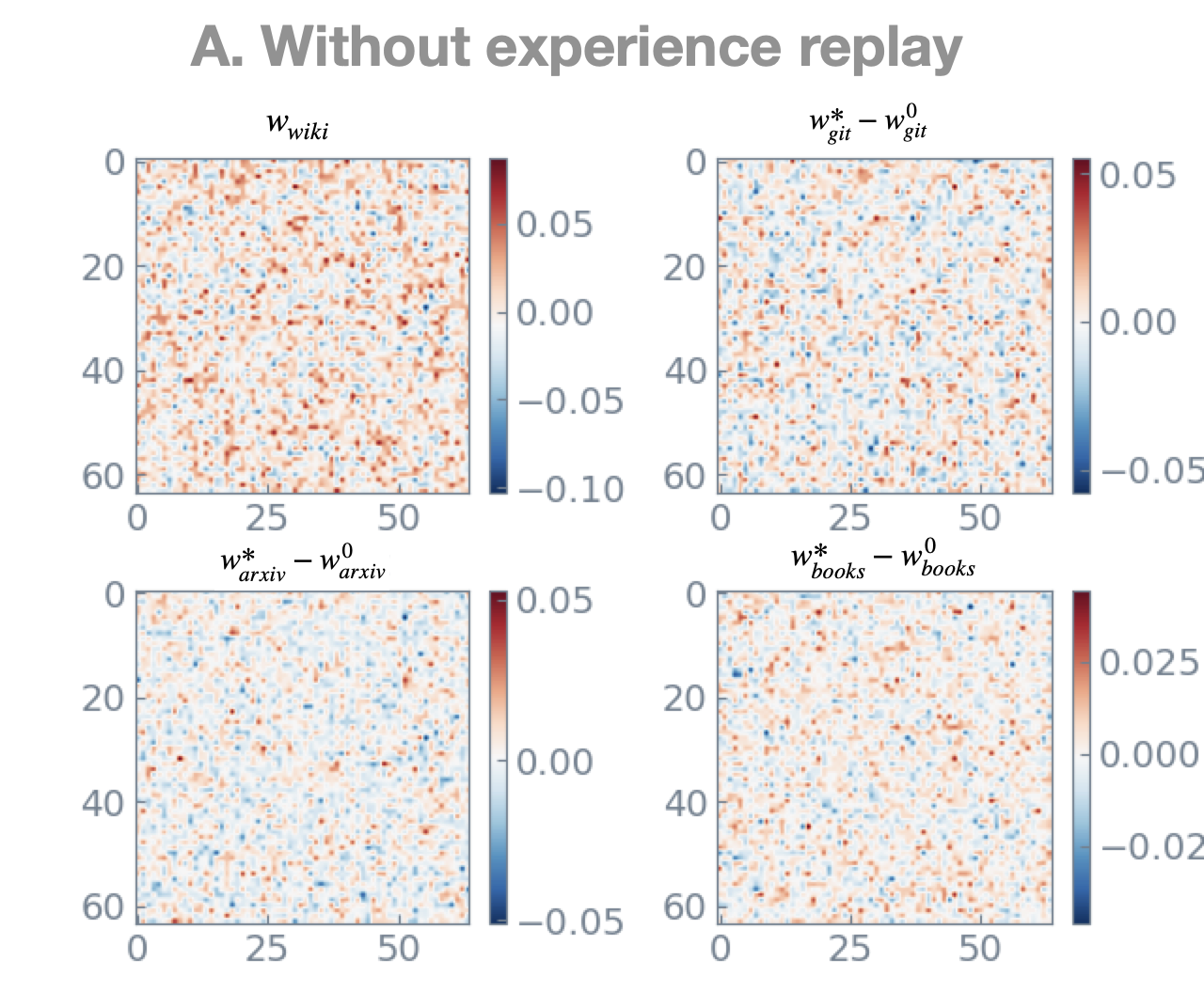}
\includegraphics[width=0.5\textwidth]{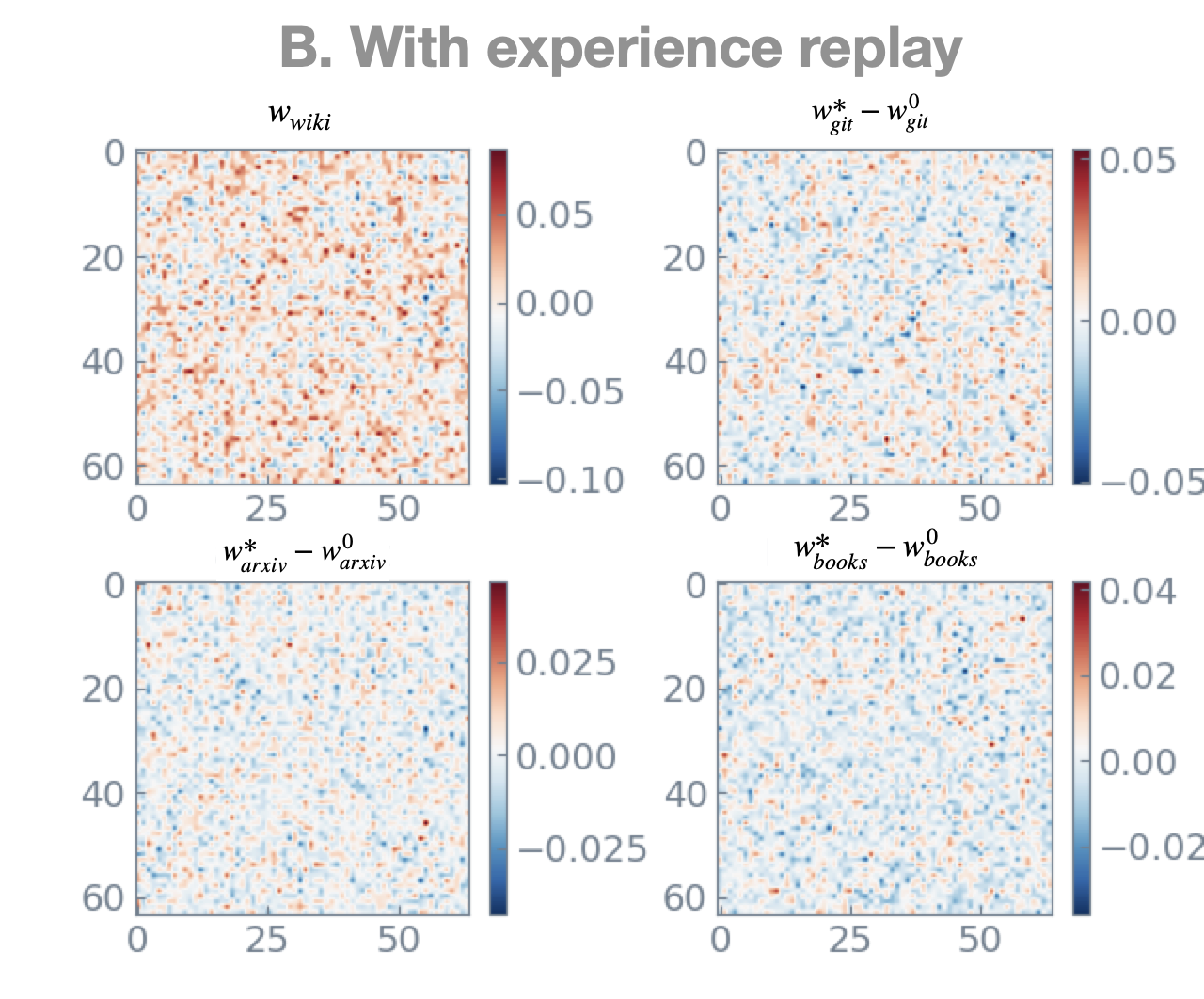}
    \caption{For a task $k$, the $64 \times 64$ heat map shows the difference in weights from the initial value, $\bf{w}_{k}^{0}$, at the start of training to the final value, $\bf{w}_{k}^{*}$, at the end of CL training. The weights are randomly sampled from the MLP sublayers in the 134M parameter model. Task arrival order: \texttt{wiki} $\rightarrow$ \texttt{git} $\rightarrow$ \texttt{arxiv} $\rightarrow$ \texttt{books}.}
    \label{fig:134m_proj}
\end{figure}

\emph{Analysis:} For the 8M model without ER, large weight changes (red in Fig.~\ref{fig:8m_proj}(A)) lead to high capacity and increased forgetting.  In the \texttt{arxiv} task, smaller changes (blue/red) show less learning and more forgetting correlating to the two terms in Lemma \ref{lem:Der_capacity} where we quantify, how weight and task changes affect the balance point.
Significant weight changes occur for the \texttt{git} task which effect the second term in Lemma \ref{lem:Der_capacity} to increase generalization (Fig. \ref{fig:8m_proj}(A)). In contrast, with ER (Fig. \ref{fig:8m_proj}(B)), weight changes between tasks are more controlled (more blue than red), reflecting how the two terms in Lemma \ref{lem:Der_capacity} balance each other. For the \texttt{books} task, weight changes are minimal (more blue), indicating marginal model adjustment, lower forgetting, and lower capacity values because the first term no longer balances the second~(as shown in Theorem \ref{thm:task_nonstationary_weights}).

For the 134M model, we observe similar trends in weight updates. Without ER~(Fig. \ref{fig:134m_proj}(A)), initial changes are slightly larger and continue to increase with each subsequent task. As with the 8M model, increased capacity and significant parameter changes indicate poor representation capability of the model. On the other hand, with ER~(Fig. \ref{fig:134m_proj}(B)), weight changes are more regularized (more blue than red) as prior tasks reduce the amount of increase in the capacity. 
%%%%%%%%%%%%%%%%%%%%%%%%%%%%%%%%%%%%%%%%%%%%%%%%%%%%%%%%%%%%
\framebox{\parbox{0.9\textwidth}{
  The submitted manuscript has been created by UChicago Argonne, LLC, Operator of Argonne National Laboratory (“Argonne”).
  Argonne, a U.S. Department of Energy Office of Science laboratory, is operated under Contract No. DE-AC02-06CH11357.
  The U.S. Government retains for itself, and others acting on its behalf, a paid-up nonexclusive, irrevocable worldwide
  license in said article to reproduce, prepare derivative works, distribute copies to the public, and perform publicly
  and display publicly, by or on behalf of the Government.  The Department of Energy will provide public access to these
  results of federally sponsored research in accordance with the DOE Public Access Plan.
  \url{http://energy.gov/downloads/doe-public-access-plan}
  }}

\end{document}